\newcommand{\xmark}{\ding{55}}
\tikzset{
  treenode/.style = {align=center, inner sep=0pt, text centered,
    font=\sffamily},
  arn_n/.style = {treenode, circle, black, font=\sffamily\bfseries, draw=black,
    fill=white, text width=1.5em},
  arn_r/.style = {treenode, circle, black, font=\sffamily\bfseries, draw=black,
    fill=white, text width=1.0em},
  arn_x/.style = {treenode, rectangle, draw=black,
    minimum width=0.5em, minimum height=0.5em}
}
\definecolor{tabblue}{HTML}{4e79a7}
\definecolor{tabred}{HTML}{e15759}
\newif\ifarxiv
\newif\ifmefomo
\icmltitlerunning{Simple Hardware-Efficient Long Convolutions for Sequence Modeling}
\newtheorem{proposition}{Proposition}
  \newcommand{\colornote}[3]{{\color{#1}\bf{#2 #3}\normalfont}}
  \newcommand{\colornote}[3]{}
\definecolor{darkred}{rgb}{0.7,0.1,0.1}
\definecolor{darkgreen}{rgb}{0.1,0.5,0.1}
\definecolor{cyan}{rgb}{0.7,0.0,0.7}
\definecolor{dblue}{rgb}{0.2,0.2,0.8}
\definecolor{maroon}{rgb}{0.76,.13,.28}
\definecolor{burntorange}{rgb}{0.81,.33,0}
\definecolor{royalpurple}{rgb}{0.47,.31,0.66}
  \newcommand{\num}[1]{{\color{red}\bf{#1}\normalfont}}
  \newcommand{\num}[1]{#1}
\newcommand{\vA}{\mathbf{A}}
\newcommand{\vB}{\mathbf{B}}
\newcommand{\vC}{\mathbf{C}}
\newcommand{\vD}{\mathbf{D}}
\newcommand{\vF}{\mathbf{F}}
\newcommand{\vI}{\mathbf{I}}
\newcommand{\vK}{\mathbf{K}}
\newcommand{\vV}{\mathbf{V}}
\newcommand{\vP}{\mathbf{P}}
\newcommand{\bR}{\mathbb{R}}
\newcommand{\fastconv}{\textsc{FlashButterfly}\xspace}
\newcommand{\squash}{\textsc{Squash}\xspace}
\newcommand{\smooth}{\textsc{Smooth}\xspace}
\titlespacing{\section}{0pt}{*0.3}{*0}
\titlespacing{\subsection}{0pt}{*0.15}{*0}
\begin{document}

\ifmefomo
\title{Simple Hardware-Efficient Long Convolutions for Sequence Modeling}
\else
\ifarxiv
\title{Simple Hardware-Efficient Long Convolutions for Sequence Modeling}
  \author[$\dagger$]{Daniel Y. Fu\thanks{Equal Contribution.}}
  \author[$\ddagger$]{Elliot L. Epstein$^*$}
  \author[$\S$]{Eric Nguyen}
  \author[$\dagger\dagger$]{Armin W. Thomas}
  \author[$\dagger$]{Michael Zhang}
  \author[$\dagger$]{Tri~Dao}
  \author[$\ddagger\ddagger$]{Atri Rudra}
  \author[$\dagger$]{Christopher R{\'e}}
  \affil[$\dagger$]{Department of Computer Science, Stanford University}
  \affil[$\ddagger$]{Institute of Computational and Mathematical Engineering, Stanford University}
  \affil[$\S$]{Department of Bioengineering, Stanford University}
  \affil[$\dagger\dagger$]{Department of Psychology, Stanford University}
  \affil[$\ddagger\ddagger$]{Department of Computer Science and Engineering, University at Buffalo, SUNY\vspace{4pt}}
  \affil[ ]{{\texttt{danfu@cs.stanford.edu}, \texttt{epsteine@stanford.edu}, \texttt{\{etnguyen,~athms,~mzhang20,~trid\}@stanford.edu}, \texttt{atri@buffalo.edu},~\texttt{chrismre@cs.stanford.edu}}}

  \date{February 13, 2023}

\else
\twocolumn[
\icmltitle{Simple Hardware-Efficient Long Convolutions for Sequence Modeling}



\icmlsetsymbol{equal}{*}

\begin{icmlauthorlist}
\icmlauthor{Firstname1 Lastname1}{equal,yyy}
\icmlauthor{Firstname2 Lastname2}{equal,yyy,comp}
\icmlauthor{Firstname3 Lastname3}{comp}
\icmlauthor{Firstname4 Lastname4}{sch}
\icmlauthor{Firstname5 Lastname5}{yyy}
\icmlauthor{Firstname6 Lastname6}{sch,yyy,comp}
\icmlauthor{Firstname7 Lastname7}{comp}
\icmlauthor{Firstname8 Lastname8}{sch}
\icmlauthor{Firstname8 Lastname8}{yyy,comp}
\end{icmlauthorlist}

\icmlaffiliation{yyy}{Department of XXX, University of YYY, Location, Country}
\icmlaffiliation{comp}{Company Name, Location, Country}
\icmlaffiliation{sch}{School of ZZZ, Institute of WWW, Location, Country}

\icmlcorrespondingauthor{Firstname1 Lastname1}{first1.last1@xxx.edu}
\icmlcorrespondingauthor{Firstname2 Lastname2}{first2.last2@www.uk}

\icmlkeywords{Machine Learning, ICML}

\vskip 0.3in
]


\fi

\ifmefomo
\maketitle
\else
\ifarxiv
\maketitle
\fi


 \begin{abstract}
State space models (SSMs) have high performance on long sequence modeling but require sophisticated initialization techniques and specialized implementations for high quality and runtime performance.
We study whether a simple alternative can match SSMs in performance and efficiency: directly learning long convolutions over the sequence.
We find that a key requirement to achieving high performance is keeping the convolution kernels smooth.
We find that simple interventions---such as squashing the kernel weights---result in smooth kernels and recover SSM performance on a range of tasks including the long range arena, image classification, language modeling, and brain data modeling.
Next, we develop \fastconv, an IO-aware algorithm to improve the runtime performance of long convolutions.
\fastconv appeals to classic Butterfly decompositions of the convolution to reduce GPU memory IO and increase FLOP utilization.
\fastconv speeds up convolutions by \num{2.2$\times$}, 
and allows us to train on
Path256, a challenging task with sequence length \num{64K}, where we set state-of-the-art by \num{29.1} points while training \num{7.2$\times$} faster than prior work.
Lastly, we introduce an extension to \fastconv that learns the coefficients of the Butterfly decomposition, increasing expressivity without increasing runtime.
Using this extension, we outperform a Transformer on WikiText103 by \num{0.2} PPL with \num{30\%} fewer parameters.

\ifmefomo
State space models (SSMs) have high performance on long sequence modeling but require sophisticated initialization techniques and specialized implementations for high quality and runtime performance.
We study whether a simple alternative can match SSMs in performance and efficiency: directly learning long convolutions over the sequence.
We find that simply squashing the long convolutional kernel weights is enough to match SSMs in performance on a range of tasks including the long range arena (LRA) and language modeling.
Next, to improve runtime performance, we develop \fastconv, an IO-aware algorithm to compute long convolutions efficiently.
\fastconv appeals to classic Butterfly decompositions of the convolution to reduce GPU memory IO and increase FLOP utilization.
\fastconv speeds up the LRA benchmark by \num{7.0$\times$} over Transformers,
and allows us to train on
Path256, a challenging task with sequence length \num{64K}, where we set state-of-the-art by \num{29.1} points while training \num{7.2$\times$} faster than prior work.
\fi

 \end{abstract}


\section{Introduction\label{sec:intro}}
\ifmefomo
A fundamental question in understanding foundation models is whether their success depends on specific architectures like attention, or whether simpler alternatives can also suffice.
\else
\fi
Recently, a new class of sequence models based on state space models (SSMs)~\citep{gu2022efficiently,li2022makes, hasani2022liquid,gupta2022diagonal} has emerged as a powerful general-purpose sequence modeling framework.
\ifmefomo
SSMs scale nearly linearly in sequence length and have shown state-of-the-art performance on a range of sequence modeling tasks, from long range sequence modeling~\citep{smith2022simplified} to language modeling~\citep{dao2022hungry, ma2022mega}.
\else
SSMs scale nearly linearly in sequence length and have shown state-of-the-art performance on a range of sequence modeling tasks, from long range modeling~\citep{smith2022simplified} to language modeling~\citep{dao2022hungry, ma2022mega}, computer vision~\citep{islam2022long,nguyen2022s4nd}, and medical analysis~\citep{tang2022spatiotemporal}.
\fi

However, SSMs rely on sophisticated mathematical structures to train effectively in deep networks~\citep{gu2022efficiently}.
These structures generate a convolution kernel as long as the input sequence by repeatedly multiplying a hidden \textit{state} matrix.
This process may be unstable~\citep{goel2022s} and requires careful hand-crafted initializations~\citep{gu2022train}, leaving practitioners with a dizzying array of choices and hyperparameters.
\ifmefomo
In this paper, we study whether we can replace the SSMs with an even simpler approach -- parameterizing the long convolution kernel directly.
\else
This begs the question, \textit{why not parameterize the long convolution kernel directly?}
\fi

There are two challenges that long convolutions face for sequence modeling.
The first is quality: previous attempts at directly parameterizing the convolution kernel have underperformed SSMs~\citep{romero2021ckconv,li2022makes}.
The second is runtime performance: long convolutions can be computed in $O(N \log N)$ FLOPS in sequence length $N$ using the Fast Fourier transform (FFT), but systems constraints often make them slower than quadratic algorithms, such as attention.
\ifmefomo
In this paper, we show that a simple regularization technique and an IO-aware convolution algorithm can address these challenges.
\else
In this paper, we show that simple regularization techniques and an IO-aware convolution algorithm can address these challenges.
The simplicity of the long convolution formulation further allows for connections to block-sparse matrix multiplication that increase expressivity beyond convolutions or SSMs.
\fi

\begin{figure*}
    \centering
    \ifmefomo
    \includegraphics{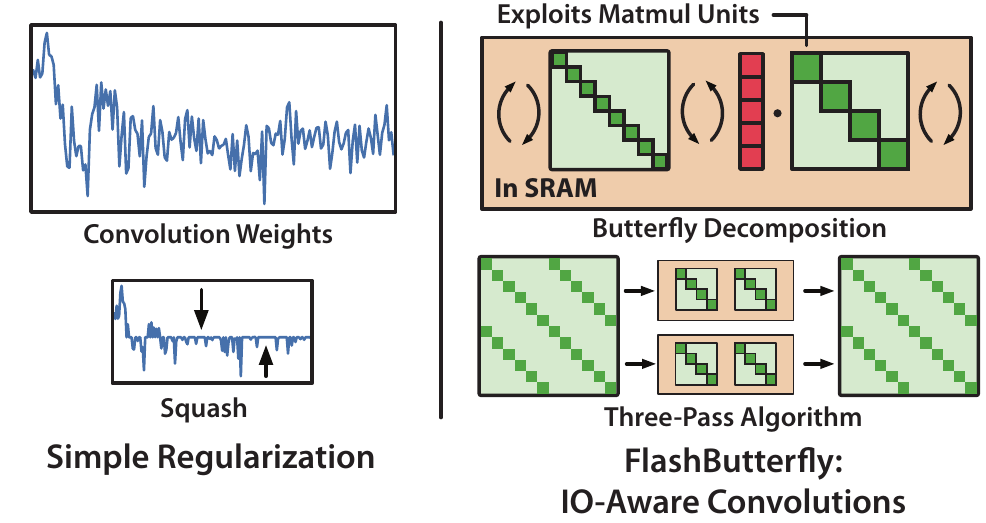}
    \else
    \ifarxiv
    \includegraphics[width=\textwidth]{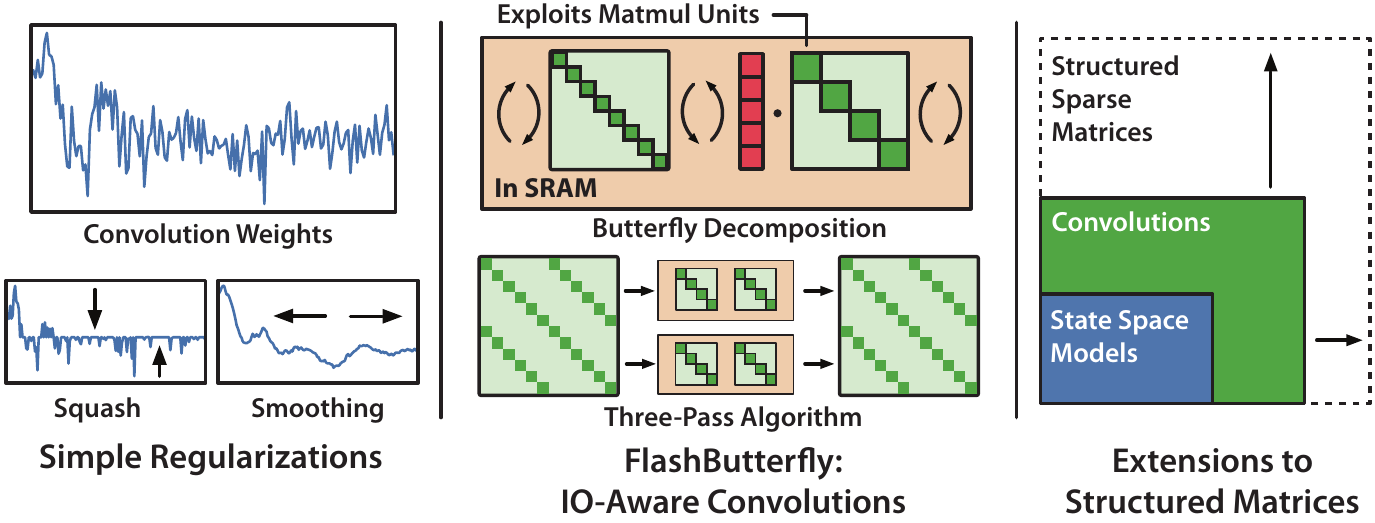}
    \else
    \includegraphics{figs/banner_pdf.pdf}
    \vspace{-1.5em}
    \fi
    \fi
    \caption{\label{fig:banner}
    \ifmefomo
    Left: A Simple regularization technique allow long convolutions to match state space models in sequence modeling.
    Right: We develop \fastconv, an IO-aware algorithm for long convolutions.
    \else
    Left: Simple regularization techniques allow long convolutions to match state space models in sequence modeling.
    Middle: \fastconv is an IO-aware algorithm for long convolutions that improves runtime performance and scales to long sequences.
    Right: We show deep connections to advances in block-sparse matrix multiplication and structured matrices.
    \fi
    }
    \ifmefomo
    \else
    \ifarxiv
    \else
    \vspace{-1.5em}
    \fi
    \fi
\end{figure*}
\ifarxiv
\paragraph{Closing the Quality Gap}
\else
\textbf{Closing the Quality Gap.}
\fi
First, to understand the quality gap, we study the performance of long convolutions compared to SSMs on Long Range Arena (LRA)~\citep{tay2020long}, a key benchmark designed to test long sequence models.
Long convolutions underperform SSMs by up to \num{16.6} points on average (Table~\ref{tab:lra}).
\ifmefomo
\else
Visualizing the convolution kernels identifies a potential culprit: the long convolution kernels are non-smooth, whereas SSM kernels are smooth (Figure~\ref{fig:kernels}).
\fi
\ifmefomo
\else

\fi
\ifmefomo
We find a simple regularization technique using a \squash operator to reduce the magnitude of the kernel weights.
Using this regularization, long convolutions also appear more robust to initialization than SSMs, matching hand-crafted S4 initializations on LRA even with \textit{completely random} initialization.
\else
We explore two simple regularization techniques from the signal processing literature that alleviate this problem. 
The first technique uses a \squash operator to reduce the magnitude kernel weights in the time domain, enforcing sparsity that translates to smoothness in the frequency domain.
The second technique applies a \smooth operator to the kernel weights in the time domain, which we find also promotes smoothness in the frequency domain.
With regularization, long convolutions recover the performance of SSMs---and appear more robust to initialization than SSMs, matching S4 on LRA even with \textit{completely random} initialization.
\fi

\ifmefomo
We further evaluate the performance of long convolutions on text modeling, where they are competitive with the recent H3 model~\citep{dao2022hungry}---coming within \num{0.3} PPL on OpenWebText~\citep{Gokaslan2019OpenWeb} and matching on the PILE~\citep{gao2020pile}.
\else
\fi

\ifmefomo
\else
Motivated by the success of these simple regularizations on LRA, we further evaluate the performance of long convolutions on other complex sequence modeling tasks from diverse modalities.
On image classification,
we find that long convolutions can be an effective drop-in replacement for SSM layers.
Replacing the SSM layer in S4 models with long convolutions yields a lift of \num{0.3} accuracy points on sequential CIFAR and comes within \num{0.8} points of S4ND-ISO on 2D CIFAR.
On text modeling, long convolutions are competitive with the recent SSM-based H3 model~\citep{dao2022hungry}---coming within \num{0.3} PPL of H3 on OpenWebText~\citep{Gokaslan2019OpenWeb} and matching H3 on the PILE~\citep{gao2020pile}.
Finally, long convolutions outperform both Transformers and SSMs in brain data modeling---by \num{0.14} and \num{0.16} MAE points, respectively---which suggests that the simpler architecture can even outperform SSMs for some applications.
\fi

\ifarxiv
\paragraph{Improving Runtime Performance}
\else
\textbf{Closing the Runtime Performance Gap.}
\fi
However, long convolutions are inefficient on modern hardware, since the FFT convolution incurs expensive GPU memory IO and cannot utilize matrix multiply units---even when using optimized implementations like cuFFT~\citep{cufft}.
SSM convolution formulations rely on specialized GPU Cauchy kernels and Vandermonde kernels, as well as special recurrent message passing structure, to overcome these challenges.

In response, we develop \fastconv, a simple IO-aware algorithm for long convolutions, which does not require ad hoc hand engineering.
\fastconv appeals to classic Butterfly decompositions of the FFT to rewrite the FFT convolution as a series of block-sparse Butterfly matrices.
This decomposition reduces the number of passes over the input sequence---reducing the GPU memory requirements---and utilizes matrix multiply units on the GPU, which increases FLOP utilization.

\ifmefomo
\else
\fastconv speeds up convolutions by \num{2.2$\times$} over cuFFT, and outperforms the fastest SSM implementations, since it does not incur the cost of generating the SSM convolution kernel.
\fi
To demonstrate \fastconv's scaling ability, we train a long convolution model on Path256, a task with sequence length 64K.
We set state-of-the-art by \num{29.1} points and train \num{7.2$\times$} faster than the previous best model.

\ifmefomo
\else
\ifarxiv
\paragraph{Deeper Connections and Learned Butterfly Extension}
\else
\textbf{Deeper Connections and Learned Butterfly Extension.}
\fi
The Butterfly decomposition in \fastconv forms deep connections to recent work in block-sparse matrix multiplication~\citep{chen2021pixelated}.
Butterfly matrices are a special case of Monarch matrices, which capture a large class of structured matrices~\citep{dao2022monarch}.
The block size $r$ interpolates between the fixed FFT for small block sizes to fully dense matrix multiplication for large matrices.
This connection suggests a natural \textit{learned Butterfly extension} that goes beyond convolutions in expressivity.

Our learned Butterfly extension simply learns the parameters in the Butterfly matrices from the data, instead of using the fixed matrices that corresopnd to the FFT and inverse FFT.
Learning the Butterfly matrices while keeping the block size fixed yields additional parameters without additional FLOPS---yielding \num{0.8} additional points of lift on sequential CIFAR.
Increasing the block size of the Butterfly matrices approaches the expressivity of fully dense matrices---including those used in linear layers and MLPs.
As a proof of concept, we use this property to replace the MLPs in a Transformer language model---and outperform a GPT-2 model on WikiText103 by \num{0.2} PPL with \num{30\%} fewer parameters.
\fi

\ifarxiv
\paragraph{Summary}
\else
\textbf{Summary.}
\fi
In summary, we show that long convolutions are an effective model for long sequence modeling.
They match or exceed SSMs across an array of diverse sequence domains while requiring less hand-crafted initializations and showing improved stability.
Additionally, by leveraging connections to Butterfly matrices, long convolutions can be trained up to \num{1.8$\times$} faster than SSMs.\footnote{Our code is available at \url{https://github.com/HazyResearch/safari}.}

\section{Background\label{sec:background}}

\paragraph{Deep State Space Models}
\ifmefomo
A discrete-time state space model (SSM) maps an input $u \in \mathbb{R}^N$, over time $t \in \{1,...,N\}$, to an output signal $y \in \mathbb{R}^N$
as $x_t = \mathbf{A}x_{t-1}+\mathbf{B}u_t$, $y_t = \mathbf{C}x_t+\mathbf{D}u_t$, 
by the use of hidden state $x_t \in \mathbb{R}^d$ and some set of matrices $\mathbf{A} \in \mathbb{R}^{d \times d}$, $\mathbf{D} \in \mathbb{R}^{1 \times 1}$,
$\mathbf{B} \in \mathbb{R}^{d \times 1}$, $\mathbf{C} \in \mathbb{R}^{1 \times d}$.
\else
A continuous-time state space model (SSM) maps an input signal $u(t) \in \mathbb{R}^N$, over time $t$, to an output signal $y(t) \in \mathbb{R}^N$ as
\begin{align*}
\dot{x}(t) =& \mathbf{A} x(t)+\mathbf{B} u(t) \\
y(t) =&  \mathbf{C} x(t) +\mathbf{D} u(t),
\end{align*}
by the use of hidden state $x(t) \in \mathbb{R}^d$ and some set of matrices $\mathbf{A} \in \mathbb{R}^{d \times d}$, $\mathbf{D} \in \mathbb{R}^{1 \times 1}$,
$\mathbf{B} \in \mathbb{R}^{d \times 1}$, $\mathbf{C} \in \mathbb{R}^{1 \times d}$.
Discretizing the SSM yields a recursion $x_t = \mathbf{A}x_{t-1}+\mathbf{B}u_t$, $y_t = \mathbf{C}x_t+\mathbf{D}u_t$.
\fi
By unrolling the recursion, $y$ can be written as a convolution between $u$ and 
a kernel $\vK$ that depends on $\mathbf{A}$, $\mathbf{B}$, $\mathbf{C}$:
\ifmefomo
$y = \vK \ast u + \vD u$.
\else
\begin{equation}
    \label{eq:conv_layer}
    y = \vK \ast u + \vD u. 
\end{equation}
\fi
\ifmefomo
\else
A key ingredient to training deep SSM models is proper initialization of the learnable  
matrices $\vA$, $\vB$, $\vC$, and $\vD$.
Initialization strategies often draw upon the HiPPO theory~\citep{gu2020hippo} on orthogonal polynomials, and involve the selection of measures and discretization strategies.
The parameters may also be unstable to learn, which can require custom learning rate schedules~\citep{gu2022train}.
\fi

\paragraph{FFT Convolution}
\ifmefomo
A standard approach to compute convolutions in $O(N \log N)$ in sequence length $N$ is to use the FFT convolution theorem.
\else
Computing the convolution in Equation~\ref{eq:conv_layer} can be costly for long sequences.
A standard approach is to compute the convolution using the FFT convolution theorem.
\fi
Then, the convolution can be computed as:
\ifmefomo
$y = u \ast \vK = \vF_N^{-1} \vD_{\vK} \vF_N u$,
\else
\begin{equation}
    \label{eq:conv_fft}
y = u \ast \vK = \vF_N^{-1} \vD_{\vK} \vF_N u,
\end{equation}
\fi
where $\vF_N$ denotes the DFT matrix of size $N$, and $\vD_{\vK} = \text{diag}(\vF_N \vK)$.
\ifmefomo
\else
This so-called FFT convolution scales in $O(N \log N)$ in sequence length $N$, but is often unoptimized on modern hardware (most optimized convolution operators focus on short convolutions, e.g., 3$\times$3).
\fi

\ifmefomo
\else
\paragraph{Runtime Performance Characteristics}
We provide a brief discussion of relevant factors affecting runtime performance.
Depending on the balance of computation and memory accesses, operations can be
classified as either compute-bound or memory-bound.
In compute-bound operations, the time accessing GPU memory is relatively small compared to the time spent doing arithmetic operations.
Typical examples are matrix multiply with large inner
dimension, and short convolution kernels with a large number of channels.
In memory-bound operations, the time taken by the operation is determined by the
number of memory accesses, while time spent in computation is much smaller.
Examples include most other operations:
elementwise (e.g., activation, dropout) and reduction (e.g., sum,
softmax, batch norm, layer norm).
\fi

\paragraph{Our Approach}
\ifmefomo
Rather than parameterizing $\vK$ with carefully initialized SSM matrices, we seek to directly parameterize the convolution kernel $\vK$.
\else
Rather than parameterizing $\vK$ with carefully initialized SSM matrices, we seek to directly parameterize the convolution $\vK$ in Equation~\ref{eq:conv_layer}.
\fi
Our goal is to replace the SSM layer with a learned convolution kernel as a drop-in replacement, while keeping the stacking and multi-head structure of SSM models (which can be thought of as multiple convolutional filters).
We also aim to make the FFT convolution runtime-performant on modern hardware.
\section{Method\label{sec:method}}

\ifmefomo
\else
\begin{table}[t]
    \caption{Accuracy on the \textsc{ListOps} task in LRA.}
    \label{tab:random_micro}
    \small
    \centering
    \begin{tabular}{rc} \toprule
    {Model} & {Accuracy}  \\ \midrule
    S4-LegS & 59.6 \\ \midrule
    Long Convs & 53.4 \\
    Long Convs, +\smooth & 59.8 \\
    Long Convs, +\squash & \textbf{60.3} \\
    Long Convs, +\squash, +\smooth & 59.7 \\
    \bottomrule 
    \end{tabular}
\end{table}
\fi
\ifmefomo
In Section~\ref{sec:long_conv}, we conduct an initial investigation into long convolutions for sequence modeling, and develop a simple regularization strategy based on our findings.
\else
In Section~\ref{sec:long_conv}, we conduct an initial investigation into long convolutions for sequence modeling, and develop two simple regularization strategies based on our findings.
\fi
Then, in Section~\ref{sec:fastconv}, we present \fastconv, an IO-aware algorithm for speeding up convolutions modeled after block-sparse matrix multiplication.
\ifmefomo
\else
Finally, we present an extension of \fastconv that leverages the block-sparse connection for additional expressivity.
\fi
\subsection{Long Convolutions for Sequence Modeling}
\label{sec:long_conv}

First, we conduct a brief investigation into the performance of vanilla long convolutions on sequence modeling, and we find a gap in quality.
\ifmefomo
We then propose a simple regularization technique for closing this gap.
\else
We then propose two simple regularization techniques for closing this gap.
\fi
\ifmefomo
\else
\begin{figure}[ht]
    \centering
    \ifarxiv
    \includegraphics[width=\textwidth]{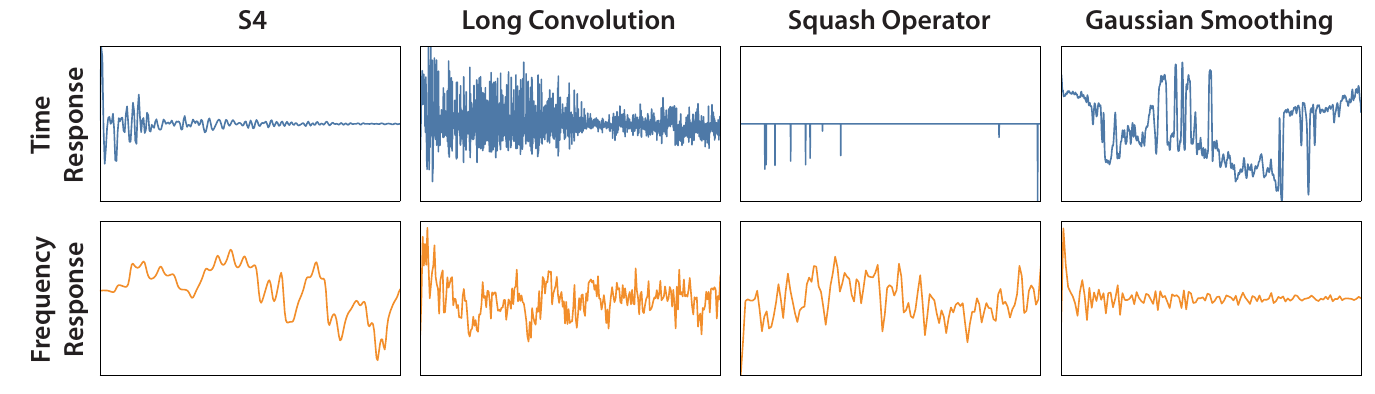}
    \else
    \includegraphics{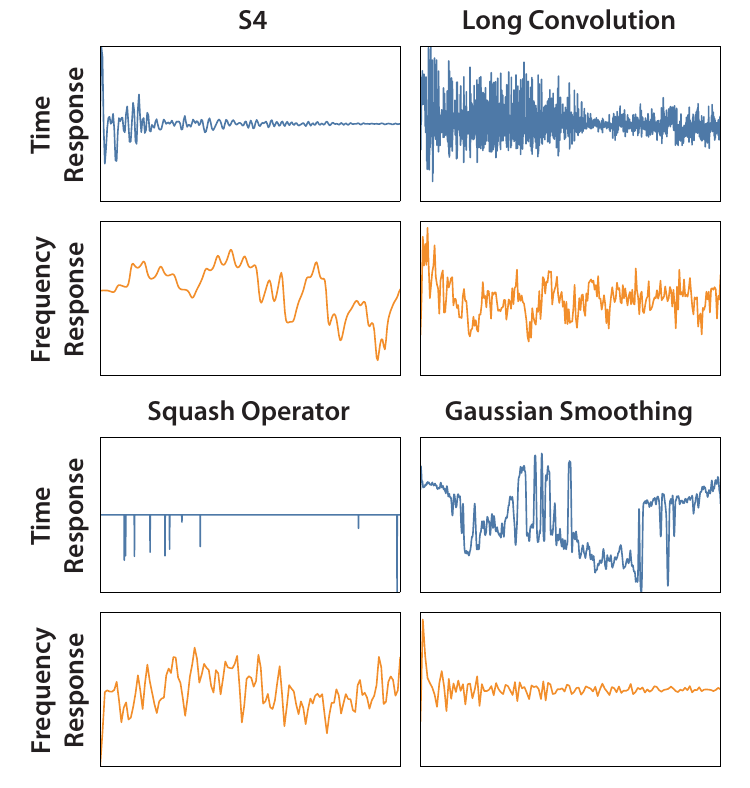}
    \fi
    \ifarxiv
    \else
    \vspace{-1em}
    \fi
    \caption{\label{fig:kernels}
    Visualizations of kernels trained on the \textsc{ListOps} task in LRA.
    Left to right, top to bottom: S4, long convolutions without regularization, long convolutions with the \squash operator, and long convolutions with the \smooth operator.
    Time response on top, frequency response on the bottom.
    }
    \ifarxiv
    \else
    \vspace{-1.5em}
    \fi
\end{figure} 
\fi
\ifmefomo
\else
\ifarxiv
\paragraph{Motivation for Regularization: Non-Smooth Kernels}
\else
\textbf{Motivation for Regularization: Non-Smooth Kernels.}
\fi
We begin by directly replacing the SSM layers in an S4 model with long convolutions, with random initialization.
We train a model on the \textsc{ListOps} task from the long range arena (LRA) benchmark~\citep{tay2020long}, with element-wise dropout on the convolution kernel weights.
Table~\ref{tab:random_micro} shows that long convolutions underperform SSMs with \num{6.2} points on \textsc{ListOps}.

To understand the gap in performance, we visualize one head of the convolution kernel $\vK$, compared to an SSM kernel in Figure~\ref{fig:kernels}.
Compared to well-initialized SSM kernels, we find that directly learning convolution weights results in convolution kernels that are non-smooth and appear noisy.
We hypothesize that these properties are responsible for the performance gap.
\fi

\ifmefomo
\else
\begin{table}[t]
    \caption{Convolution- and SSM-specific hyperparameters.}
    \ifarxiv
    \else
    \scriptsize
    \fi
    \label{tab:hyperparams}
    \centering
    \begin{tabular}{rll} \toprule
    {Model} & {Hyperparameters} & {Initializations} \\ \midrule
    SSM & $d$, $lr_A$, $lr_B$, $lr_C$& LegS, FouT, LegS/FouT \\
    & dropout, discretization & Inv, Lin \\
    Long Convs & $\lambda$, kernel LR, & Random, Geometric \\
    & k, dropout &  \\
    \bottomrule 
    \end{tabular}
\end{table}
\fi

\ifarxiv
\paragraph{Regularizing the Kernel}
\else
\textbf{Regularizing the Kernel.}
\fi
\ifmefomo
We begin by directly replacing the SSM layers in an S4 model with long convolutions, with random initialization. Table~\ref{tab:lra} shows that long convolutions underperform SSMs by \num{16.6} points on average across LRA.
We propose a simple technique for regularizing the convolution kernel using the \squash operator.
\else
We propose two simple techniques for regularizing the convolution kernel to alleviate these issues: \squash and \smooth.
\fi
The \squash operator is applied element-wise to the convolution kernel, and reduces the magnitude of all weights:
$\overline{\vK} = \text{sign}(\vK) \odot \text{max}(|\vK| - \lambda, 0)$.
\ifmefomo

\else
As an aside, we note that \squash is equivalent to taking one step of an L1 proximal operator:
$\overline{\vK} =\operatorname{Prox}_{\lambda \| .\|}(\vK) =\operatorname{argmin}_{x}\{\lambda \lVert x \rVert_1 + \lVert x-\vK \rVert_2^2 \}$
and thus may have principled connections to proximal gradient techniques.
\fi
\ifmefomo
\else
The \smooth operator applies simple average pooling, with width $p$, to the convolution kernel: $\overline{\vK}_k = (2p+1)^{-1}\sum_{j=1}^{2p+1} \vK_{k+j - p}.$
\ifarxiv

\else
\fi
Training long convolutions with these regularizations matches SSMs in performance on the \textsc{ListOps} task (Table~\ref{tab:random_micro}).
\ifarxiv
Additionally, Figure~\ref{fig:kernels} right shows that these regularizations improve smoothness in the frequency domain as well.
\else
Additionally, Figure~\ref{fig:kernels} bottom shows that these regularizations improve smoothness in the frequency domain as well.
\fi
\fi
In Appendix~\ref{sec:exp_supp}, we evaluate directly smoothing in frequency domain.

\ifarxiv
\paragraph{Initialization}
\else
\textbf{Initialization.}
\fi
\ifmefomo
\else
We seek to understand how sensitive long convolutions are to initialization.
We note that since $\vK$ directly parameterizes the convolution kernel, we can also leverage advances in initialization in SSMs such as HiPPO~\citep{gu2020hippo} and S4-LegS~\citep{gu2022train}---simply by converting the initialized SSM model to a convolution kernel, and initializing $\vK$ to the convolution weights.

While complex initialization strategies can be powerful, they require careful tuning to configure.
\fi 
To understand the impact of initialization on long convolutions, we evaluate two simple intialization techniques: random initialization, and a geometric decay initialization.
The random initialization initializes the weights to be randomly distributed from a Normal distribution: $\vK_i \sim \mathcal{N}$.
The geometric decay initialization additionally scales kernel weights to decay across the sequence, as well as across the heads.
For the kernel $\vK^{(h)}$, $1 \leq h \leq H$, we initialize the weights as:
$\vK^{(h)}_k = x \exp{(-kN^{-1}\left(H/2\right)^{hH^{-1}})},$
for $1 \leq k \leq N$, where $x \sim \mathcal{N}$ is drawn from a Normal distribution.

\ifmefomo
\else

\paragraph{Summary}
The full method is written in Algorithm~\ref{alg:AMMIT}, with a forward reference to our fast convolution solution \fastconv. 
In Algorithm~\ref{alg:AMMIT}, all operators ($\operatorname{max}$, $\operatorname{sign}$, and absolute value) are applied entry-wise, \fastconv is taken over the sequence dimension and the skip connection is taken over the head dimension.
\fi
\ifmefomo
\else
\ifarxiv
\begin{algorithm}[h]
\else
\begin{algorithm}[t]
\fi
    \algsetup{linenosize=\tiny}
    \caption{\label{alg:AMMIT} Regularized Long Convolution}
    \ifarxiv
    \else
    \small
    \fi
    \begin{algorithmic}[1]
      \REQUIRE Input $u \in \mathbb{R}^{B \times H \times N}$, $\vK \in \mathbb{R}^{H \times N}$, $\vD \in \mathbb{R}^{H}$, where $N$ is the sequence length, $H$ is the head dimension, and $B$ is the batch size.
      \STATE $\vK \gets \operatorname{dropout}(\vK)$.
      \ifmefomo
      \else
      \STATE $\vK_k \gets (2p+1)^{-1}\sum_{j=1}^{2p+1} \vK_{k+j - p}$.
      \fi
      \STATE ${\vK} \gets \text{sign}(\vK) \odot \text{max}(|\vK| - \lambda, 0)$.
      \STATE $y \gets \fastconv(\vK,u) + \vD \odot u.$
      \STATE Return $y \in \mathbb{R}^{B \times H \times N}$
    \end{algorithmic}
\end{algorithm}
\fi
\ifmefomo
\else
Convolution-specific hyperparameters are shown in Table~\ref{tab:hyperparams}.
Compared to the hyperparameters necessary to train S4, our regularization approaches have fewer hyperparameters and choices than S4.
\fi

\subsection{\fastconv}
\label{sec:fastconv}

\ifmefomo
We present \fastconv, an IO-aware algorithm for speeding up general convolutions on modern hardware.
Following H3~\citep{dao2022hungry}, we use kernel fusion to reduce GPU memory IO requirements, and use a Butterfly decomposition to rewrite the FFT as a series of block-sparse matrix multiplications, allowing better utilization of modern matrix multiply units.
The details are shown in Appendix~\ref{sec:methods_details_supp}.
To scale to sequences that does not fit into SRAM (length 8K or longer on A100), the method presented in H3~\citep{dao2022hungry} does not work anymore, as it depended in a critial way on the recurrent nature of convolutions induced by SSMs.  
Instead, we use an alternate Butterfly decomposition to construct a three-pass FFT convolution algorithm to further reduce IO requirements.
\else
In addition to improving the quality of long convolutions, it is also critical to improve runtime performance.
We present \fastconv, an IO-aware algorithm for speeding up general convolutions on modern hardware.
We use kernel fusion to reduce GPU memory IO requirements, and use a Butterfly decomposition to rewrite the FFT as a series of block-sparse matrix multiplications.
To scale to long sequences, we use an alternate Butterfly decomposition to construct a three-pass FFT convolution algorithm to further reduce IO requirements.
\fi

\ifmefomo
\else
\ifarxiv
\paragraph{Kernel Fusion}
\else
\textbf{Kernel Fusion.}
\fi
Naive implementations of the FFT convolution incur expensive GPU memory IO.
The FFT, inverse FFT, and pointwise multiplication in Equation~\ref{eq:conv_fft} each require at least one read and write of the input sequence from GPU memory.
For long sequences, the IO costs may be even worse: the entire input sequence cannot fit into SRAM, so optimized implementations such as cuFFT~\citep{cufft} must take multiple passes over the input sequence using the Cooley-Tukey decomposition of the FFT~\citep{cooley1965an}.
Following \textsc{FlashAttention}~\citep{dao2022flashattention}, \fastconv's first fuses the entire FFT convolution into a single kernel to compute the entire convolution in GPU SRAM and avoid this overhead.

\ifarxiv
\paragraph{Butterfly Decomposition}
\else
\textbf{Butterfly Decomposition.}
\fi
Kernel fusion reduces the IO requirements, but the fused FFT operations still cannot take full advantage of specialized matrix multiply units on modern GPUs, such as Tensor Cores on Nvidia GPUs, which perform fast 16 $\times$ 16 matrix multiplication.
We appeal to a classical result, known as the four-step or six-step FFT algorithm~\citep{bailey1990ffts}, that rewrites the FFT as a series of block-diagonal Butterfly matrices~\citep{parker1995random} interleaved with permutation.

The Butterfly decomposition states that we can decompose an $N$-point FFT into a series of FFTs of sizes $N_1$ and $N_2$, where $N = N_1N_2$.
Conceptually, the algorithm reshapes the input as an $N_1 \times N_2$ matrix, applies $N_1$ FFTs of size $N_2$ to the columns, multiplies each element by a twiddle factor, and then applies $N_2$ FFTs of size $N_1$ to the rows.

More precisely, let $\vF_N$ denote the DFT matrix corresponding to taking the $N$-point FFT.
Then, there exist permutation matrices $\vP$, and a diagonal matrix $\vD$, such that $\vF_N = \vP(\vI_{N_2} \otimes \vF_{N_1})\vP^T \vD(\vI_{N_1} \otimes \vF_{N_2})\vP$.
$\vP$ denotes a permutation matrix that reshapes the input to $N_1 \times N_2$ and takes the transpose, $\vD$ denotes a diagonal matrix with the twiddle factors along the diagonal, $\otimes$ denotes the Kronecker product, and $vI_{N_i}$ and $\vF_{N_i}$ are the identity and DFT matrices of size $N_i \times N_i$.
Precise values for $\vF_{N_i}$, $\vD$, and $\vP$ are given in Appendix~\ref{sec:methods_details_supp}.

The Butterfly decomposition incurs $O(Nr \log N / \log r)$ FLOPS for a sequence length $N = r^p$, with \textit{block size} $r$.
In general FFT implementations, $N$ is typically padded to a power of two, so that the block size can be set to $2$ to minimize the total number of FLOPS.
However, on GPUs with a specialized $b \times b$ matrix multiply unit, the FLOP cost of computing an $r \times r$ matrix multiply with $r < b$ is equivalent to performing a single $b \times b$ matrix multiply.
Thus, the actual FLOP count scales as $O(N b \log N / \log r)$ for $r < b$.
Increasing the block size up to $b$ actually \textit{reduces} the FLOP cost.

Table~\ref{tab:block_micro} demonstrates this tradeoff on an A100 GPU, which has specialized matrix multiply units up to 16 $\times$ 32.
Runtime decreases as $r$ increases from 2, even though theoretical FLOPS increase.
Once $r > b$, runtime begins increasing as actual FLOPS increase as well.

\begin{table}[t]
    \caption{Runtime, GLOPs, and FLOP util for the Butterfly decomposition with different block sizes $r$ for sequence length 4096, on A100 with batch size $128$, head dimension $32$.}
    \label{tab:block_micro}
    \ifarxiv
    \else
    \small
    \fi
    \centering
    \begin{tabular}{rccc} \toprule
    {Block Size} & {Runtime (ms)} & GLOPs & FLOP Util \\ \midrule
    2 & 0.52 & 2.0 & 1.3\% \\
    16 & 0.43 & 8.1 & 6.0\% \\
    64 & 0.53 & 21.5 & 13.0\% \\
    256 & 0.68 & 64.5 & 30.4\% \\
    \bottomrule 
    \end{tabular}
\end{table}

\fi

\ifarxiv
\paragraph{Three-Pass Algorithm}
\else
\textbf{Three-Pass Algorithm.}
\fi
\ifmefomo
\else
Kernel fusion and the Butterfly decomposition improve runtime performance, but only for convolutions short enough to fit into SRAM (length 8K or shorter on A100).
For longer sequences, we again appeal to the Butterfly decomposition, but using an alternate formulation that eliminates permutations over the input sequence.
This formulation allows us to decompose the convolution into three passes over the data: a Butterfly matrix multiplication that can be computed with a single IO, FFT convolutions that we can compute in parallel, and a final Butterfly matrix multiplication that can also be computed with a single IO.

\fi
In particular, we rewrite the DFT matrix $\vF_N$ of size $N$ as $N \vP^{-1} (\vI_{m} \otimes (l \vF_{l})) \overline{\vB}^{-1},$ and its inverse matrix $\vF_N^{-1}$ as $N^{-1} \overline{\vB} (\vI_{m} \otimes \overline{\vF}_{l}) \vP$, where $\vB$ is an $N \times N$ block matrix with $m^2$ blocks of size $l \times l$, each of which is diagonal (see Appendix~\ref{sec:methods_details_supp} for the exact derivation).
Critically, matrix-vector multiply $\vB u$ can be computed in a single pass over the input vector $u$.
\ifmefomo
Substituting these into the FFT convolution decomposition and simplifying yields the following:
\else
Substituting these into Equation~\ref{eq:conv_fft} and simplifying yields the following:
\fi
\ifmefomo
$y = u \ast \vK = \overline{\vB} (\vI_{m} \otimes \overline{\vF}_{l}) \vD'_{\vK} (\vI_{m} \otimes \vF_{l}) \overline{\vB}^{-1}$,
\else
\begin{equation}
\label{eq:three_pass}
y = u \ast \vK = \overline{\vB} (\vI_{m} \otimes \overline{\vF}_{l}) \vD'_{\vK} (\vI_{m} \otimes \vF_{l}) \overline{\vB}^{-1},
\end{equation}
\fi
where $\vD'_{\vK} = l\vP\vD_{\vK}\vP^{-1}$ is another diagonal matrix.
The middle terms can now be computed as $m$ independent FFT convolutions of size $l$, with a different convolution kernel.
These parallel convolutions collectively require one pass over $N$ input elements, so the entire convolution can be computed with three passes over the input.

\ifmefomo
\else
The full algorithm for \fastconv for $N > l$ is shown in Algorithm~\ref{alg:fastconv}.
\ifarxiv
\begin{algorithm}[h]
\else
\begin{algorithm}[t]
\fi
    \algsetup{linenosize=\tiny}
    \caption{\label{alg:fastconv} \fastconv}
    \small
    \begin{algorithmic}[1]
      \REQUIRE Input $u \in \mathbb{R}^{B \times H \times N}$, $\vK \in \mathbb{R}^{H \times N}$, $\vD \in \mathbb{R}^{H}$, where $N = lm$ is the sequence length, $H$ is the head dimension, and $B$ is the batch size. 
      \STATE $\hat{\vK} \gets FFT(\vK)$
      \STATE $\vD'_{\vK} \gets \vP (\hat{\vK}) \vP^{-1}$ 
      \STATE $u \gets \overline{\vB}^{-1}u$
      \STATE Compute $u \gets (\vI_{m} \otimes \overline{\vF}_{l}) \vD'_{\vK} (\vI_{m} \otimes \vF_{l}) u$
      in parallel across $m$ streaming multiprocessors
      \STATE Return $\overline{\vB} u \in \bR^{B \times H \times N}$
    \end{algorithmic}
  \end{algorithm}

We show that Algorithm~\ref{alg:fastconv} is correct, and that it can be computed in three passes over the input sequence.
The proof is given in Appendix~\ref{sec:theory_supp}.
\begin{proposition}
    \label{prop:fastconv}
    Algorithm~\ref{alg:fastconv} computes the convolution $u \ast \vK$ with at most three passes over the input sequence $u$.
\end{proposition}
\fi
\ifmefomo
\else
\ifarxiv
\subsubsection{Learned Butterfly Extension\label{sec:method_extension}}
\else
\textbf{Learned Butterfly Extension.}
\fi
The Butterfly decomposition in \fastconv suggests a natural extension: learning the values of the Butterfly matrices $\vF_r$ in the Butterfly decomposition, instead of using the fixed matrices corresponding to the FFT.
If we keep the block size $r$ fixed, then the number of parameters in the Butterfly matrices increases by $O(Hr^2)$, but the total FLOPS in the model stay the same.
Increasing the block size allows us to further increase expressivity, but at additional compute cost.
As $r$ approaches $N$, the Butterfly decomposition approaches the compute cost and expressivity of a full dense matrix multiply: $O(N^2)$.
\fi

\section{Evaluation\label{sec:eval}}

\ifmefomo
We evaluate how well long convolutions perform in the challenging LRA benchmark as well as on the OpenWebText language task.
\else
We evaluate how well long convolutions perform in a variety of challenging sequence modeling tasks from diverse modalities and benchmarks, including the long range arena benchmark, image classification, text modeling, and brain data modeling (Section~\ref{subsec:quality}).
We find that long convolutions are strong sequence modelers across these tasks.
\fi
Next, we evaluate the runtime performance of long convolutions under \fastconv and evaluate how well it scales to very long sequences (Section~\ref{subsec:efficiency}).
\ifmefomo
\else
Finally, we evaluate the quality improvements from learned Butterfly extension (Section~\ref{subsec:exp_extension}).
\fi

\ifmefomo
\begin{table*}[t]
    \caption{Validation accuracy of different models on the LRA benchmark. Best in bold, second best underlined.}
    \label{tab:lra}
    \scriptsize
    \centering
    \begin{tabular}{rccccccc} \toprule
    {Model} & {ListOps} & {Text} & {Retrieval} & {Image} & {Pathfinder} & {Path-X} & {Avg} \\ \midrule
    Transformer  & 36.4  & 64.3 & 57.5  & 42.4 & 71.4 & \xmark  & 53.7 \\
    \ifmefomo
    \else
    Nyströmformer & 37.2   & 65.5 & 79.6   & 41.6 & 70.9 & \xmark  & 57.5  \\
    Reformer & 37.3 & 56.1 & 53.4 & 38.1 & 68.5 & \xmark & 50.6 \\
    BigBird & 36.1 & 64.0 & 59.3 & 40.8 & 74.9 & \xmark & 54.2 \\
    Linear Trans. & 16.1 & 65.9 & 53.1 & 42.3 & 75.3 & \xmark & 50.5 \\
    Performer & 18.0 & 65.4 & 53.8 & 42.8 & 77.1 & \xmark & 51.2 \\
    \fi
    \midrule
    S4-LegS  & 59.6  & 86.8 & \underline{90.9}  & \underline{88.7} & 94.2 & \underline{96.4}  & 86.1  \\ 
    S4-FouT  & 57.9  & 86.2 & 89.7  & \textbf{89.1} & \underline{94.5} & \xmark  & 77.9  \\
    \ifmefomo
    \else
    S4-LegS/FouT & 60.5 & 86.8 & 90.3 & \underline{89.0} & 94.4 & \xmark & 78.5 \\
    S4D-LegS & 60.5 & 86.2 & 89.5 & 88.2 & 93.1 & 92.0 & 84.9 \\ 
    S4D-Inv & 60.2 & 87.3 & \underline{91.1} & 87.8 & 93.8 & 92.8 & 85.5 \\
    S4D-Lin & 60.5 & 87.0 & 91.0 & 87.9 & 94.0 & \xmark & 78.4 \\
    \fi
    S4 (Original) & 58.4 & 76.0 & 87.1 & 87.3 & 86.1 & 88.1 & 80.5 \\ \midrule
    Long Convs, No Regularization & 53.4 & 64.4 & 83.0 & 81.4 & 85.0 & \xmark & 69.5 \\
    Long Convs, Random Init + \squash & \underline{61.4} & \underline{88.0} & 90.2 & \underline{88.7} & \textbf{94.6} & \textbf{97.1} & \textbf{86.7} \\
    Long Convs, Geom Init + \squash & \textbf{62.2} & \textbf{89.6} & \textbf{91.3} & 87.0 & 93.2 & 96.0 & \underline{86.6} \\
    \bottomrule 
    \end{tabular}
\vspace{-2em}
\end{table*}
\else
\begin{table*}[t]
    \caption{Validation accuracy of different models on the LRA benchmark. Best in bold, second best underlined.}
    \label{tab:lra}
    \small
    \centering
    \begin{tabular}{rccccccc} \toprule
    {Model} & {ListOps} & {Text} & {Retrieval} & {Image} & {Pathfinder} & {Path-X} & {Avg} \\ \midrule
    Transformer  & 36.4  & 64.3 & 57.5  & 42.4 & 71.4 & \xmark  & 53.7 \\
    Nyströmformer & 37.2   & 65.5 & 79.6   & 41.6 & 70.9 & \xmark  & 57.5  \\
    Reformer & 37.3 & 56.1 & 53.4 & 38.1 & 68.5 & \xmark & 50.6 \\
    BigBird & 36.1 & 64.0 & 59.3 & 40.8 & 74.9 & \xmark & 54.2 \\
    Linear Trans. & 16.1 & 65.9 & 53.1 & 42.3 & 75.3 & \xmark & 50.5 \\
    Performer & 18.0 & 65.4 & 53.8 & 42.8 & 77.1 & \xmark & 51.2 \\
    \midrule
    S4-LegS  & 59.6  & 86.8 & 90.9  & 88.7 & \underline{94.2} & \underline{96.4}  & \underline{86.1}  \\ 
    S4-FouT  & 57.9  & 86.2 & 89.7  & \textbf{89.1} & \textbf{94.5} & \xmark  & 77.9  \\
    S4-LegS/FouT & \underline{60.5} & 86.8 & 90.3 & \underline{89.0} & 94.4 & \xmark & 78.5 \\
    S4D-LegS & \underline{60.5} & 86.2 & 89.5 & 88.2 & 93.1 & 92.0 & 84.9 \\ 
    S4D-Inv & 60.2 & \underline{87.3} & \underline{91.1} & 87.8 & 93.8 & 92.8 & 85.5 \\
    S4D-Lin & 60.5 & 87.0 & 91.0 & 87.9 & 94.0 & \xmark & 78.4 \\
    S4 (Original) & 58.4 & 76.0 & 87.1 & 87.3 & 86.1 & 88.1 & 80.5 \\ \midrule
    Long Conv, Rand & 53.4 & 64.4 & 83.0 & 81.4 & 85.0 & \xmark & 69.5 \\
    Long Conv, Rand + \smooth & 59.8 & 68.7 & 86.6 & 79.3 & 86.1 & \xmark & 71.8 \\
    Long Conv, Rand + \squash & 60.3 & 87.1 & 90.0 & 88.3 & 94.0 & \textbf{96.9} & \underline{86.1} \\
    Long Conv, Rand + \squash + \smooth & 59.7 & 72.8 & 88.6 & 80.8 & 90.1 & \xmark & 73.7 \\
    Long Conv, Exp + \squash & \textbf{62.2} & \textbf{89.6} & \textbf{91.3} & 87.0 & 93.2 & 96.0 & \textbf{86.6} \\
    \bottomrule 
    \end{tabular}
\ifarxiv
\else
\vspace{-2em}
\fi
\end{table*}
\fi

\ifarxiv
\begin{table}[t]
\begin{minipage}{.5\linewidth}
    \caption{Image classification on flattened images.}
    \label{tab:scifar}
    \centering
    \small
    \begin{tabular}{rc} \toprule
    {Model} & {sCIFAR} \\ \midrule
    Transformer & 62.2 \\ \midrule
    LSTM & 63.0 \\
    r-LSTM & 72.2 \\
    UR-LSTM & 71.0 \\
    UR-GRU & 74.4 \\
    HIPPO-RNN & 61.1 \\
    LipschitzRNN & 64.2 \\ 
    CKConv & 64.2 \\ \midrule
    S4-LegS & \underline{91.8} \\
    S4-FouT & 91.2 \\
    S4D-LegS & 89.9 \\
    S4D-Inv & 90.7 \\
    S4D-Lin & 90.4 \\
    \midrule
    Long Conv, Random  & 91.0 \\
    Long Conv, Geom Init  & \textbf{92.1} \\
    \bottomrule 
    \end{tabular}
\ifarxiv
\else
\vspace{-1em}
\fi
\end{minipage}
\begin{minipage}{.5\linewidth}
    \caption{Image classification on 2D images.}
    \label{tab:s42d}
    \centering
    \small
    \begin{tabular}{rc} \toprule
    {Model} & {CIFAR} \\ \midrule
    S4ND-ISO & \textbf{89.9} \\ \midrule
    Long Conv 2D-ISO, Rand init  & 88.1 \\
    Long Conv 2D-ISO, Geom init  & \underline{89.1} \\
    \bottomrule 
    \end{tabular}
    \ifarxiv
    \else
    \vspace{-1em}
    \fi
\end{minipage}
\end{table}

\begin{table}[t]
\begin{minipage}{.45\linewidth}
    \caption{Test PPL of models trained on OpenWebText.}
    \label{tab:owt}
    \centering
    \small
    \begin{tabular}{rc} \toprule
    {Model} & {Test PPL} \\ \midrule
    Transformer & 20.6 \\
    S4D & 24.9 \\
    GSS & 24.0 \\
    H3 & \textbf{19.6} \\ \midrule
    H3 + Long-Conv, Rand Init & 20.1 \\
    H3 + Long-Conv, Geom Init & \underline{19.9} \\
    \bottomrule 
    \end{tabular}
\ifarxiv
\else
\vspace{-1em}
\fi
\end{minipage}~~
\begin{minipage}{.45\linewidth}
    \caption{\label{tab:the_pile} Test PPL on the Pile for models trained with various tokens.}
    \centering
    \small
    \begin{tabular}{rccc}
    \toprule
    Train Tokens & 5B & 10B & 15B \\
    \hline
    Transformer & 12.7 & 11.3 & 10.7 \\
    H3 & \textbf{11.8} & \textbf{10.7} & \textbf{10.2} \\ \hline
    H3 + Long Convs, Geom Init & \underline{11.9} & \textbf{10.7} & \underline{10.3} \\
    \bottomrule
    \end{tabular}
\end{minipage}
\end{table}

\begin{table}[t]
    \caption{Evaluation on brain fMRI data.}
    \label{tab:fmri}
    \centering
    \small
    \begin{tabular}{rc} \toprule
    {Model} & {MAE} \\
    \midrule
    {Transformer} & {0.68} \\ 
    {H3} & {0.70} \\
    {H3 + Long Convs, Rand Init} & \underline{0.58} \\
    {H3 + Long Convs, Geom Init} & \bf{0.54} \\
    \bottomrule 
    \end{tabular}
\ifarxiv
\else
\vspace{-1em}
\fi
\else
\ifmefomo
\begin{table}[t]
    \begin{minipage}{.5\linewidth}
    \caption{Test PPL of models trained on OpenWebText.}
    \label{tab:owt1}
    \centering
    \scriptsize
    \begin{tabular}{rc} \toprule
    {Model} & {Test PPL} \\ \midrule
    Transformer & 20.6 \\
    \ifmefomo
    \else
    S4D & 24.9 \\
    \fi
    GSS & 24.0 \\
    H3 & \textbf{19.6} \\ \midrule
    H3 + Long-Conv, Rand Init & 20.1 \\
    H3 + Long-Conv, Exp Init & 19.9 \\
    \bottomrule 
    \end{tabular}
    \end{minipage}
    \begin{minipage}{.5\linewidth}
    \caption{LRA Speed Benchmark.}
    \label{tab:lra_speed1}
    \centering
    \scriptsize
    \begin{tabular}{rc} \toprule
    {Model} & {Speedup} \\ \midrule
    Transformer & 1$\times$ \\
    FlashAttention & 2.4$\times$ \\
    Block-Sparse FlashAttention & 2.8$\times$ \\ \midrule
    S4 & 2.9$\times$ \\ \midrule
    \fastconv & \textbf{\num{7.0}$\times$} \\
    \bottomrule 
    \end{tabular}
    \end{minipage}
\end{table}
\else
\begin{table}[ht]\end{table}
\begin{table}[ht]\end{table}
\begin{table}[t]\end{table}
\begin{table}[t]\end{table}
\fi
\fi

\subsection{Quality on Sequence Modeling\label{subsec:quality}}

\ifmefomo
We begin by evaluating various regularization and initialization techniques on the long range arena benchmark, a suite of six general-purpose sequence modeling tasks with sequence length between 1K and 16K tokens, covering
modalities including text, natural and synthetic images, and mathematical expressions~\citep{tay2020long}.
We then evaluate long convolutions on language modeling.
\else
In this section, we evaluate the performance of long convolutions in sequence modeling in terms of quality.
We begin by evaluating various regularization and initialization techniques on the long range arena benchmark, a suite of general-purpose sequence modeling tasks designed to stress test long sequences~\citep{tay2020long}.
We take the best-performing variants and move on to two challenging and diverse modalities that have been used to evaluate sequence models, including SSMs: image classification (both one-dimensional and two-dimensional) and text modeling.
We conclude the section with a real-world application of long convolutions to brain data modeling.

We find that long convolutions perform well across all of these diverse tasks and modalities---and are generally more robust to choice of initialization than SSMs.
Our results suggest that long convolutions may be a compelling simpler alternative to SSMs for sequence modeling.
\fi
Experimental details for the tasks are given in Appendix~\ref{sec:exp_details_supp}, and additional experiments are provided in Appendix~\ref{sec:exp_supp}.

\ifarxiv
\subsubsection{Long Sequence Modeling: Long Range Arena}
\else
\textbf{Long Sequence Modeling: Long Range Arena.}
\fi
\ifmefomo
\else
We first evaluate long convolutions on Long Range Arena (LRA), a benchmark suite used to test general-purpose sequence modeling over long contexts.
LRA consists of six long-range sequence modeling tasks, with sequence lengths between 1K and 16K tokens.
The tasks have modalities including text, natural and synthetic images, and mathematical expressions.
We take the state-of-the-art S4 architecture~\citep{gu2022train}, and replace the SSM layers with long convolutions.

We present five variants of long convolutions: random intialization and no regularization, random initialization with the \smooth operator, random initialization with the \squash operator, random initialization with both operators, and the geometric initialization with the \squash operator.
We compare the long convolution methods against variants of Transformers presented in the original Long Range Arena paper~\citep{tay2020long}, as well as variants of S4 with different parameterizations and initializations~\citep{gu2022train}.
These initializations are important for S4 to achieve high quality.

\fi
Table~\ref{tab:lra} shows the results for long convolutions on the LRA benchmark.
An \xmark\xspace in the Path-X column indicates that the model never achieved better classification accuracy than random guessing.
\ifmefomo
Long convolutions are more robust to different initializations than variants of S4.
\else
Long convolutions appear to be robust to initialization: there is only a \num{0.5} point spread in the average score between long convolutions with a geometric initialization and long convolutions with a random initialization---though individual tasks may have more spread.
This stands in contrast to the S4 methods, which are sensitive to initialization choices and the parameterization---with a spread of \num{7.6} points between S4-LegS and S4-LegS/FouT.
\fi

Regularization is critical for achieving strong performance; without it, long convolutions lose \num{17.1} points on average across the six LRA tasks.
Using the \squash operator on its own appears to perform better than using the \smooth operator, or using both together.
For the rest of the experiments, we focus on the two best-performing variants of long convolutions: random initialization with the \squash operator, and geometric initialization with the \squash operator.

\ifmefomo
\else
\ifarxiv
\subsubsection{Image Classification}
\else
\textbf{Image Classification}
\fi
Next, we evaluate long convolutions on image classification.
We evaluate two settings which have been used to evaluate SSMs and sequence models: 1D pixel-by-pixel image classification, and 2D image classification.
These settings are challenging for sequence modeling, as they require modeling complex spatial relationships between image pixels in a continuous space.
For the 1D case, we again use long convolutions as a drop-in replacement for the SSM layer in the state-of-the-art S4 architecture.
For the 2D case, we replace the S4 layers in S4ND~\citep{nguyen2022s4nd} with 2D long convolution filters.

Tables~\ref{tab:scifar} and~\ref{tab:s42d} show the results.
On 1D image classification, long convolutions again match the performance of S4, even with random initializations, while their performance improves further by \num{1.1} points when using the geometric initialization.
On 2D image classification, long convolutions come within \num{0.8} points of the state-of-the-art S4ND model.
Further regularization or inductive bias may be helpful for long convolutions to recover the performance of SSMs in higher dimensions.
\fi

\ifarxiv
\subsubsection{Text Modeling: OpenWebText and the PILE}
\else
\textbf{Text Modeling: OpenWebText and the PILE.}
\fi
\ifmefomo
We evaluate long convolutions on text modeling.
\else
We evaluate long convolutions on text modeling.
Text has been a challenging modality for state space models and non-attention sequence models, since it requires comparing and copying elements across the input sequence~\citep{dao2022hungry,olsson2022context}.
\fi
We build off of the H3 model~\citep{dao2022hungry}---the state-of-the-art SSM model for text modeling---which stacks two SSMs and multiplies their outputs together as a gating mechanism.
We use long convolutions as a drop-in replacement for the SSMs in the H3 layer.

Following the H3 paper, we keep two attention layers in the overall language model and evaluate on two datasets: OpenWebText~\citep{Gokaslan2019OpenWeb} and the Pile~\citep{gao2020pile}.
We use OpenWebText to evaluate the role of initialization: we train models to completion at 100B tokens, and evaluate both random and geometric initializations.
For the Pile, we evaluate how well long convolutions scale with data: we use the geometric initialization, and evaluate the performance of models trained with 5B, 10B, and 15B tokens.

\ifmefomo
Table~\ref{tab:owt1} shows the results.
\else
Tables~\ref{tab:owt} and~\ref{tab:the_pile} show the results.
\fi
On OpenWebText, long convolutions with random initialization come within \num{0.5} PPL points of H3, and the geometric decay initialization comes within \num{0.3} PPL.
Both models outperform the Transformer.
On the Pile, long convolutions with geometric decay initialization nearly match H3 everywhere along the data scaling curve, and outperform Transformers.
These initial results suggests that convolutions---with some multiplicative gating mechanism---may be a promising candidate for language modeling.

\ifmefomo
\else
\ifarxiv
\subsubsection{Brain fMRI Analysis}
\else
\textbf{Brain fMRI Analysis.}
\fi
Finally, we evaluate long convolutions on a real-world sequence modeling modality: analysis of brain functional Magnetic Resonance Imaging (fMRI) sequence data.
To this end, we replicate the self-supervised pre-training task proposed by~\citet{thomas2022self}: 
training models to predict whole-brain activity for the next time step of an fMRI sequence (using a large-scale upstream dataset, spanning fMRI data from $11,980$ experimental runs of $1,726$ individuals).
We compare long convolutions against Transformers and H3, architectures that achieve state-of-the-art performance in this task~\citep{thomas2022self,dao2022hungry}, by adapting the H3 model and replacing the SSM kernel with long convolutions.
Long convolutions outperform the other models in accurately predicting brain activity in this task (see Table~\ref{tab:fmri}).
Full details of this analysis are provided in Appendix~\ref{sec:fmri_supp}, where we also show that long convolutions perform on par with the other models in accurately classifying new fMRI sequences in a downstream adaptation.
\fi

\subsection{Efficiency: \fastconv\label{subsec:efficiency}}

\ifmefomo
\else
\begin{table}[t]
    \caption{LRA Speed Benchmark.}
    \label{tab:lra_speed}
    \centering
    \ifarxiv
    \else
    \small
    \fi
    \begin{tabular}{rc} \toprule
    {Model} & {Speedup} \\ \midrule
    Transformer & 1$\times$ \\
    \textsc{FlashAttention} & 2.4$\times$ \\
    SSM + \textsc{FlashConv} & 5.8$\times$ \\ \midrule
    \fastconv & \textbf{\num{7.0}$\times$} \\
    \bottomrule 
    \end{tabular}
\ifarxiv
\else
\vspace{-1em}
\fi
\end{table}
\begin{figure*}[t]
    \centering
    \ifarxiv
    \includegraphics[width=\textwidth]{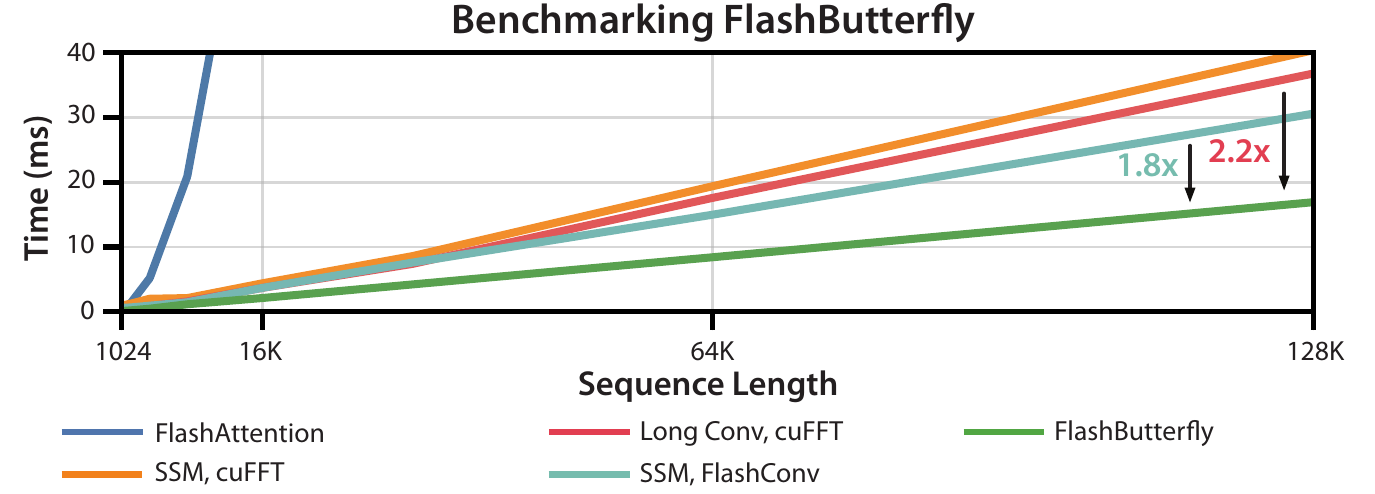}
    \else
    \includegraphics[width=5in]{figs/performance_pdf.pdf}
    \vspace{-1em}
    \fi
    \caption{\label{fig:fast_conv}
    We compare the performance of \fastconv to attention, SSMs with cuFFT, long convolutions with cuFFT, and SSMs with \textsc{FlashConv}, the most optimized SSM algorithm~\citep{dao2022hungry}.
    Speedups shown for sequence length 128K.
    }
\ifarxiv
\else
\vspace{-1em}
\fi
\end{figure*}
\fi
\begin{table}[t]
    \caption{Runtime and accuracy on Path256 (sequence length 64K).}
    \label{tab:path256}
    \centering
    \ifarxiv
    \else
    \scriptsize
    \fi
    \begin{tabular}{rcc} \toprule
    {Model} & {Accuracy} & {Training Time} \\ \midrule
    Transformer & \xmark & \xmark \\ \midrule
    \textsc{FlashAttention} & \xmark & \xmark \\
    Block-Sparse \textsc{FlashAttention} & 63.1 & 3 days \\ \midrule
    \fastconv & \textbf{92.2} & 10 hours \\
    \bottomrule 
    \end{tabular}
\ifarxiv
\else
\vspace{-2em}
\fi
\end{table}

\ifmefomo
\else
We now turn towards evaluating the runtime performance of \fastconv.
We focus on two questions: whether \fastconv can outperform SSMs in terms of runtime performance, and how well \fastconv can scale to long sequences.
First, we evaluate \fastconv's runtime on the Long Range Arena speed benchmark~\citep{tay2020long}, which measures runtime on a byte-level text classification benchmark that is representative of standard sequence modeling loads.
\fastconv outperforms SSMs and baselines from the original LRA speed benchmark.
Next, we evaluate how well \fastconv scales to longer sequences.
Across many sequence lengths, \fastconv outperforms the fastest SSM implementation.
Finally, we demonstrate \fastconv's sequence scaling capabilities on an extremely long sequence task: Path256, which has sequence length 64K.
\fi

\ifmefomo
\textbf{Runtime on Long Range Arena.}
\else
\ifarxiv
\subsubsection{Runtime on Long Range Arena}
\else
\textbf{Runtime on Long Range Arena.}
\fi
\fi
We begin by evaluating runtime on the Long Range Arena speed benchmark~\citep{tay2020long}.
The benchmark measures runtime on a byte-level text classification task.
This task, which has sequence length 4K, is representative of typical sequence modeling training workloads, and is a standard evaluation benchmark for Transformers and SSMs~\citep{tay2020long}.
The benchmark is measured in terms of speedup against vanilla Transformers using a HuggingFace implementation.
We additionally compare against two more baselines: a) Transformers using \textsc{FlashAttention}~\citep{dao2022flashattention}, the fastest attention algorithm, and b) SSMs using \textsc{FlashConv}~\citep{dao2022hungry}, the fastest SSM implementation.

\ifmefomo
Table~\ref{tab:lra_speed1} compares a long convolution with \fastconv against Transformers, and S4 with \textsc{FlashConv}.
\else
Table~\ref{tab:lra_speed} shows the results.
\fi
\fastconv achieves \num{7.0$\times$} speedup over the Transformer baseline.
It outperforms \textsc{FlashAttention}, since its compute scales nearly linearly with sequence length instead of quadratically.
It also outperforms \textsc{FlashConv}, the fastest SSM implementation, since it does not require kernel generation.
These results show that \fastconv outperforms SSMs and Transformers in terms of runtime efficiency in standard sequence modeling workloads.

\ifmefomo
\else
\ifarxiv
\subsubsection{Scaling to Longer Sequences}
\else
\textbf{Benchmark Across Sequence Lengths.}
\fi
Next, we evaluate how well \fastconv scales to longer sequence lengths.
We compare \fastconv against a) convolutions using cuFFT, the standard implementation in PyTorch, and b) SSMs using \textsc{FlashConv}.
We measure the runtime for sequence lengths ranging from \num{1K} to \num{128K}.
Following~\citep{dao2022hungry}, we measure the runtime of a single layer using batch size \num{32} and \num{128} model dimension.
We also provide attention runtime, as well as SSMs using a standard PyTorch implementation, for context.

Figure~\ref{fig:fast_conv} shows the results.
\fastconv yields up to \num{2.2$\times$} speedup against baseline cuFFT-based convolutions.
\fastconv outperforms \textsc{FlashConv} for all sequence lengths, since it does not require the kernel generation step of SSMs.
These results show that \fastconv outperforms SSMs and Transformers across all sequence lengths---even very long sequences.
\fi

\ifarxiv
\else
\textbf{Very Long Sequence Lengths.}
\fi
We demonstrate the utility of \fastconv by training models on a task with extremely long sequences: Path256, which has sequence length 64K.
Table~\ref{tab:path256} shows that long convolutions achieve state-of-the-art performance on Path256, outperforming block-sparse \textsc{FlashAttention} from~\citep{dao2022flashattention}, the only prior work to report non-trivial performance ($>$50\% accuracy) on Path256.
Long convolutions with \fastconv exceed state-of-the-art performance by \num{29.1} points, and train \num{7.2$\times$} faster.

\ifmefomo
\else
\subsection{Learned Butterfly Extension\label{subsec:exp_extension}}

\ifarxiv
\begin{table}[t]
    \begin{minipage}{.45\linewidth}
    \caption{Performance with learnable Butterfly of different sizes}
    \label{tab:monarch}
    \centering
    \ifarxiv
    \else
    \small
    \fi
    \begin{tabular}{rcc} \toprule
    Block Size & {sCIFAR} & {Speedup} \\ \midrule
    Fixed Butterfly & 91.0 & 1$\times$\\ \midrule
    16 & 91.8 & 1$\times$\\
    32 & 92.4 & 0.9$\times$\\
    256 & \textbf{92.5} & 0.6$\times$ \\
    \bottomrule 
    \end{tabular}
\ifarxiv
\else
\vspace{-2em}
\fi
    \end{minipage}~~
    \begin{minipage}{.5\linewidth}
    \caption{Performance of replacing MLPs with the long conv extension in a Transformer on WikiText103.}
    \label{tab:wikitext}
    \centering
    \ifarxiv
    \else
    \small
    \fi
    \begin{tabular}{rcc} \toprule
    {Model} & {PPL} & Params  \\ \midrule
    GPT-2-Small & 20.6 & 124M \\
    Monarch-GPT-2-Small & 20.7 & 72M  \\ \midrule
    \fastconv-GPT-2-Small & \textbf{20.4} & 86M  \\
    \bottomrule 
    \end{tabular}
\ifarxiv
\else
\vspace{-2em}
\fi
    \end{minipage}
    \end{table}
\else
\begin{table}[t]\end{table}
\begin{table}[t]\end{table}
\fi

Finally, we experimentally evaluate how well the learned Butterfly extension can improve quality on two tasks: sequential CIFAR and WikiText103.

First,
on sequential CIFAR, we use the same architecture as in Section~\ref{subsec:quality}, except with learned Butterfly matrices.
Table~\ref{tab:monarch} shows the results for sequential CIFAR, with varying block sizes.
Block size \num{16} yields lift over the baseline with fixed Butterfly matrices, without sacrificing runtime.
Larger block sizes yield further lift, but at the cost of additional runtime.

Next, on WikiText103, we evaluate the learned Butterfly extension in an alternate setting: replacing MLPs in a Transformer, following~\citep{dao2022monarch}.
In this setting, we leverage the fact that a Butterfly matrix with large block size (\num{256}) approximates a dense matrix multiplication, but has fewer parameters.
We compare our learned Butterfly extension against a Transformer with dense MLPs, and against Transformers where the MLPs have been replaced with Monarch matrices~\citep{dao2022monarch}.
The metric is whether we can achieve the same performance as a Transformer with dense MLPs, but with fewer parameters.

Table~\ref{tab:wikitext} shows the results.
Our extension outperforms both the baseline Transformer and Monarch, outperforming the Transformer with a \num{30\%} reduction in parameters.
This result validates the connection between our learned Butterfly extension and structured sparse matrices.
\fi

\section{Conclusion}
\label{sec:conc}

\ifmefomo
We show that long convolutions are a simple, yet effective approach to long sequence modeling.
We find that regularizing the kernel weights with a squash operator allows long convolutions to achieve strong performance on a variety of long sequence modeling tasks.
Finally, we develop \fastconv to improve the runtime efficiency of long convolutions.
\else
We find that regularizing the kernel weights with a squash operator allows long convolutions to achieve strong performance on a variety of long sequence modeling tasks.
We develop \fastconv to improve the runtime efficiency of long convolutions, using Butterfly decompositions, and we connect convolutions to recent advances in block-sparse matrix multiplication.
\fi

\ifarxiv
\section*{Acknowledgments}

We are very grateful to Sarah Hooper, Arjun Desai, Khaled Saab, Simran Arora, and Laurel Orr for providing feedback on early drafts of this paper and helping to copyedit.
We thank Together Computer for providing portions of the compute used to train models in this paper.
This work was supported in part by high-performance computer time and resources from the DoD High Performance Computing Modernization Program.
We gratefully acknowledge the support of NIH under No. U54EB020405 (Mobilize), NSF under Nos. CCF1763315 (Beyond Sparsity), CCF1563078 (Volume to Velocity), and 1937301 (RTML); US DEVCOM ARL under No. W911NF-21-2-0251 (Interactive Human-AI Teaming); ONR under No. N000141712266 (Unifying Weak Supervision); ONR N00014-20-1-2480: Understanding and Applying Non-Euclidean Geometry in Machine Learning; N000142012275 (NEPTUNE); NXP, Xilinx, LETI-CEA, Intel, IBM, Microsoft, NEC, Toshiba, TSMC, ARM, Hitachi, BASF, Accenture, Ericsson, Qualcomm, Analog Devices, Google Cloud, Salesforce, Total, the HAI-GCP Cloud Credits for Research program,  the Stanford Data Science Initiative (SDSI),
Department of Defense (DoD) through the National Defense Science and
Engineering Graduate Fellowship (NDSEG) Program, 
and members of the Stanford DAWN project: Facebook, Google, and VMWare. The U.S. Government is authorized to reproduce and distribute reprints for Governmental purposes notwithstanding any copyright notation thereon. Any opinions, findings, and conclusions or recommendations expressed in this material are those of the authors and do not necessarily reflect the views, policies, or endorsements, either expressed or implied, of NIH, ONR, or the U.S. Government. 
Atri Rudra's research is supported by NSF grant CCF-1763481.
\fi

\ifmefomo
\bibliographystyle{iclr2023_conference}
\bibliography{main}
\else
\bibliographystyle{icml2022}
\bibliography{main}
\fi
\appendix
\clearpage


\section{Related Work}
\label{sec:related}

\paragraph{State space models}
Following S4~\citep{gu2022efficiently}, deep state space models have been demonstrating promise in sequence modeling.
These models have been especially promising for long sequences, which are challenging for architectures such as Transformers~\citep{vaswani2017attention}, and has required custom approaches to adapt to higher-dimensional data~\citep{dosovitskiy2020image, liang2021evit} or long sequences~\citep{dai2019transformer, wu2022memorizing}.
Deep SSMs have shown state-of-the-art performance on a number of domains, including time series data~\citep{zhang2023effectively,gu2022efficiently,zhou2022deep}, audio~\citep{goel2022s}, visual data~\citep{nguyen2022s4nd}, text~\citep{ma2022mega,mehta2022long,dao2022hungry}, and medical data~\citep{tang2022spatiotemporal}.
A number of methods have also been proposed to simplify the S4 architecture in parameterization~\citep{gupta2022diagonal,gu2022parameterization,smith2022simplified}, make the parameterization more numerically stable~\citep{goel2022s}, or improve the initialization~\citep{gu2022train}.
Some of these approaches have also combined SSMs with other sequence modeling primitives~\citep{hasani2022liquid}, including attention~\citep{ma2022mega,mehta2022long,dao2022hungry}, and~\citet{goel2022s} have used SSMs as a drop-in replacement in audio generation models.
Our work is complementary to these approaches.
For example, one way to leverage principled initialization techniques is to apply them directly to the long convolutions.
Our work also suggests that long convolutions may be promising architectures for the downstream applications where SSMs have shown strong performance.

\paragraph{Convolutions}
Convolutions have a long history in signal processing~\citep{smith1997scientist} and machine learning, especially in computer vision~\citep{he2016deep,krizhevsky2017imagenet,lecun1998gradient}.
Most models are based on short, localized convolutions~\citep{trockman2022patches}, and most libraries are optimized for short convolutions~\citep{paszke2019pytorch}.
Recently, in conjunction with the development of state space models, there has been growing interesting in models that use long convolutions~\citep{romero2021flexconv, romero2021ckconv, varol2017long}, often with implicit representations~\citep{li2022makes, guibas2021adaptive, lee2021fnet}.
Approaches such as CKConv~\citep{romero2021ckconv} and SGConv~\citep{li2022makes} have shown that convolutions can be effective for sequence modeling, but require parameter counts that grow sub-linearly in sequence length and build in explicit biases into the parameterization~\citep{li2022makes}.
Our work provides additional support for the use of long convolutions for sequence modeling, and suggest that---with the right regularization---long convolutions can be used successfully for sequence modeling without controlling for parameter counts.

\paragraph{FFT Algorithms}
The computational feasibility of long convolution models depends on the Fast Fourier Transform (FFT).
The Cooley-Tukey FFT algorithm, published in 1965 \citep{cooley1965an}, enabled convolution and Fourier transforms to scale in the length dimension from $O(N \log N)$ instead of $O(N^2)$.
Subsequently, many alternative algorithms for efficiently computing the Fourier transform have emerged, 
including algorithms for computing the FFT in parallel \citep{ayinala2011pipelined}.
These algorithms have enabled fundamental progress in a range of disciplines, including control theory~\citep{brigham1988fast,bekele2016cooley} and signal processing~\citep{oppenheim1978applications,oppenheim2001discrete}.  
A survey of methods is included in \citeauthor{chu1999inside, bahn2009parallel}.

As FFTs prove more useful for modern deep learning workloads---e.g., through long convolutions---new techniques are required to run them efficiently.
Of particular interest to our work is making FFTs run efficiently on GPUs with specialized matrix multiplication units, such as tensor cores.
For example, an A100 GPU has a maximum of 312 TFLOPs/s of FP16 with
tensor cores, but only 20 TFLOPs/s of FP32 (and 40 TFLOPs/s of FP16) without
tensor cores~\citep{nvidia2020nvidia}.
This trend started with the V100 GPUs~\citep{nvidia2017nvidia} and has continued with the H100 GPUs~\citep{nvidia2022nvidia}.
Our work is related to and draws from efforts in the high-performance computing community to accelerate FFTs given these new hardware primitives~\citep{li2021tcfft}, but focuses specifically on using them in convolutions.
In the convolution workload, it is important to mitigate IO costs and increase FLOP utilization in concert.

\paragraph{Sparse Structured Matrices}
Sparse structured matrices have recently received a great deal of attention as a promising research topic in making machine learning models more runtime- and parameter-efficient.
Sparse training has a long history in machine learning, including work in pruning neural networks~\citep{han2015deep, han2015learning, sanh2020movement, lin2017runtime, dong2017learning} and finding lottery tickets~\citep{frankle2018lottery, frankle2019stabilizing, frankle2020linear}.
Structured matrices are another approach to making models more efficient.
Structured matrices have subquadratic ($o(n^2)$ for dimension $n \times n$) parameters and runtime, such as sparse and low-rank matrices, and fast transforms (Fourier, Chebyshev, sine/cosine, orthogonal polynomials)~\citep{dao2022monarch}.
Critically, simple divide-and-conquer schemes can lead to fast algorithms for many structured matrices~\citep{de2018two}, and structured matrices can be used to represent many commonly used fast transforms~\citep{sindhwani2015structured, kailath1979displacement, eidelman1999new}.
Our connection to these matrices comes through butterfly matrices~\citep{dao2019learning,chen2021pixelated,parker1995random}, which have been shown to be expressive and hardware-efficient~\citep{dao2022monarch}.
Butterfly matrices have also been used in kernel methods~\citep{choromanski2019unifying, munkhoeva2018quadrature} and deep learning methods~\citep{prabhu2020butterfly, lin2021deformable, ailon2021sparse}, which may suggest other fruitful avenues of future work for long convolutions with a learned Butterfly formulation.
Our work suggests that using long convolution models may offer an inroads to utilizing structured matrices in deep learning.

\section{Additional Experiments\label{sec:exp_supp}}
\ifmefomo

\subsection{Image Classification}
We evaluate long convolutions on image classification.
We evaluate two settings which have been used to evaluate SSMs and sequence models: 1D pixel-by-pixel image classification, and 2D image classification.
These settings are challenging for sequence modeling, as they require modeling complex spatial relationships between image pixels in a continuous space.
For the 1D case, we use long convolutions as a drop-in replacement for the SSM layer in the state-of-the-art S4 architecture.
For the 2D case, we replace the S4 layers in S4ND~\citep{nguyen2022s4nd} with 2D long convolution filters.

Tables~\ref{tab:scifar} and~\ref{tab:s42d} show the results.
On 1D image classification, long convolutions again match the performance of S4, even with random initializations, while their performance improves further by \num{1.3} points when using the exponential decay initialization.
On 2D image classification, long convolutions come within \num{0.8} points of the state-of-the-art S4ND model---which suggests that higher dimensions may require different techniques or inductive bias to recover the same performance.
\else
\fi

\subsection{Regularization in Frequency Domain}

\begin{table*}[t]
    \caption{Validation accuracy of different models on the LRA benchmark. This table includes extra experiments where we run the \smooth operator over the frequency domain. Best in bold, second best underlined.}
    \label{tab:lra_freq}
    \scriptsize
    \centering
    \begin{tabular}{rccccccc} \toprule
    {Model} & {ListOps} & {Text} & {Retrieval} & {Image} & {Pathfinder} & {Path-X} & {Avg} \\ \midrule
    Transformer  & 36.4  & 64.3 & 57.5  & 42.4 & 71.4 & \xmark  & 53.7 \\
    Nyströmformer & 37.2   & 65.5 & 79.6   & 41.6 & 70.9 & \xmark  & 57.5  \\
    Reformer & 37.3 & 56.1 & 53.4 & 38.1 & 68.5 & \xmark & 50.6 \\
    BigBird & 36.1 & 64.0 & 59.3 & 40.8 & 74.9 & \xmark & 54.2 \\
    Linear Trans. & 16.1 & 65.9 & 53.1 & 42.3 & 75.3 & \xmark & 50.5 \\
    Performer & 18.0 & 65.4 & 53.8 & 42.8 & 77.1 & \xmark & 51.2 \\
    \midrule
    S4-LegS  & 59.6  & 86.8 & 90.9  & 88.7 & \underline{94.2} & \underline{96.4}  & \underline{86.1}  \\ 
    S4-FouT  & 57.9  & 86.2 & 89.7  & \textbf{89.1} & \textbf{94.5} & \xmark  & 77.9  \\
    S4-LegS/FouT & \underline{60.5} & 86.8 & 90.3 & \underline{89.0} & 94.4 & \xmark & 78.5 \\
    S4D-LegS & \underline{60.5} & 86.2 & 89.5 & 88.2 & 93.1 & 92.0 & 84.9 \\ 
    S4D-Inv & 60.2 & \underline{87.3} & \underline{91.1} & 87.8 & 93.8 & 92.8 & 85.5 \\
    S4D-Lin & 60.5 & 87.0 & 91.0 & 87.9 & 94.0 & \xmark & 78.4 \\
    S4 (Original) & 58.4 & 76.0 & 87.1 & 87.3 & 86.1 & 88.1 & 80.5 \\ \midrule
    Long Conv, Rand & 53.4 & 64.4 & 83.0 & 81.4 & 85.0 & \xmark & 69.5 \\
    Long Conv, Rand + \smooth & 59.8 & 68.7 & 86.6 & 79.3 & 86.1 & \xmark & 71.8 \\
    Long Conv, Rand + \smooth, Freq & 56.1 & 67.9 & 86.8 & 85.2 & 88.3 & \xmark & 72.4 \\
    Long Conv, Rand + \squash & 60.3 & 87.1 & 90.0 & 88.3 & 94.0 & \textbf{96.9} & \underline{86.1} \\
    Long Conv, Rand + \squash + \smooth & 59.7 & 72.8 & 88.6 & 80.8 & 90.1 & \xmark & 73.7 \\
    Long Conv, Rand + \squash + \smooth, Freq & 59.7 & 72.8 & 88.6 & 85.7 & 88.3 & 84.9 & 78.8 \\
    Long Conv, Exp + \squash & \textbf{62.2} & \textbf{89.6} & \textbf{91.3} & 87.0 & 93.2 & 96.0 & \textbf{86.6} \\
    \bottomrule 
    \end{tabular}
\end{table*}
In earlier experiments, we also experimented with smoothing convolutions directly in the frequency domain.
We convert the convolution kernel $\vK$ to frequency domain with an FFT, apply the smoothing operator \smooth, and then convert it back to the time domain with an inverse FFT.
Table~\ref{tab:lra_freq} shows the results on LRA, with the convolutions smoothed in frequency domain denoted by ``\smooth, Freq.''
The performance is similar to smoothing in time domain, but underperforms using the \squash operator on its own, so we elected to just use the \squash operator in the remaining experiments.

\subsection{Time Series Forecasting}
\begin{table*}[t]
    \centering
    \tiny
    \caption{Univariate long sequence time-series forecasting results on ETTh1 Informer benchmark. Comparisons across five horizon prediction settings. Best mean squared error (MSE) and mean absolute error (MAE) in bold. Numbers reported from \cite{gu2022efficiently}. Long Convs outperforms S4 and obtains best MSE and MAE in four out of five evaluation settings.}
    \label{tab:informer-s-long}
    \begin{tabular}{@{}cccccccccc@{}}
    \toprule
    Methods & Long Convs & \multicolumn{1}{c}{S4}          & \multicolumn{1}{c}{Informer}  & \multicolumn{1}{c}{LogTrans} & \multicolumn{1}{c}{Reformer} & \multicolumn{1}{c}{LSTMa} & \multicolumn{1}{c}{DeepAR} & \multicolumn{1}{c}{ARIMA} & \multicolumn{1}{c}{Prophet} \\ \midrule
    Metric  & MSE~MAE              & MSE~MAE            & MSE~MAE          & MSE~MAE          & MSE~MAE          & MSE~MAE         & MSE~MAE         & MSE~MAE         & MSE~MAE          \\ \midrule
    24  & \textbf{0.06}~0.20            & 0.06~\textbf{0.19} & 0.10~0.25   & 0.10~0.26        & 0.22~0.39        & 0.11~0.27       & 0.11~0.28       & 0.11~0.28       & 0.12~0.28        \\
    48  & \textbf{0.07}~\textbf{0.21}   & 0.08~0.22          & 0.16~0.32   & 0.17~0.33        & 0.28~0.45        & 0.19~0.36       & 0.16~0.33       & 0.18~0.42       & 0.17~0.33         \\
    168 & \textbf{0.07}~\textbf{0.21}   & 0.10~0.26          & 0.18~0.35   & 0.21~0.38        & 1.52~1.19        & 0.24~0.39       & 0.24~0.42       & 0.40~0.50       & 1.22~0.76        \\
    336 & 0.08~\textbf{0.23}            & \textbf{0.08}~0.23 & 0.22~0.39   & 0.23~0.40        & 1.86~1.12        & 0.59~0.70       & 0.45~0.55       & 0.47~0.59       & 1.55~1.82         \\
    720 & \textbf{0.09}~\textbf{0.24}   & 0.12~0.27          & 0.27~0.44   & 0.27~0.46        & 2.11~1.44        & 0.68~0.77       & 0.66~0.71       & 0.66~0.77       & 2.74~3.25        \\ \bottomrule
    \end{tabular}
\end{table*}
Time series forecasting is another challenging modality for sequence modeling, which requires reasoning over multiple time contexts.
We evaluate the performance of long convolutions on different future horizon prediction windows in ETTh$_1$, a real-world long sequence time series forecasting task from the Informer benchmark \cite{zhou2021informer}. Following the original S4 paper, we evaluate on the univariate ETTh$_1$ task, which involves predicting electricity transformer temperature at hour-long granularities (i.e., 24, 48, 168, 336, and 720 hours in the future). For each prediction task, we use the same number of hours before as a look-back window to input to the model. As LongConvs can be a drop-in replacement for the S4 kernel, we also follow the approach taken in S4 that simply masks out the future time steps in the input sequence and treat the task as a masked sequence-to-sequence transformation.
Table~\ref{tab:informer-s-long} shows the results.
Long convolutions match or outperform S4 on all context windows, and outperforms custom hand-crafted architectures designed specifically for time series forecasting.

\subsection{Brain fMRI Downstream Adaptation}
\begin{table}[t]
    \caption{Downstream performance on brain fMRI data.}
    \label{tab:fmri_downstream}
    \centering
    \begin{tabular}{lrc} \toprule
    {Dataset} & {Model} & {F1} \\
    \midrule
    {MDTB} & {Transformer} & {91.8} \\
    {} & {H3} & {92.0} \\ 
    {} & {H3 + Long Convs, Rand Init} & \bf{92.1} \\
    {} & {H3 + Long Convs, Exp Init} & {91.6} \\
    \midrule
    {HCP} & {Transformer} & {83.4} \\
    {} & {H3} & {82.6} \\ 
    {} & {H3 + Long Convs, Rand Init} & {82.3} \\
    {} & {H3 + Long Convs, Exp Init} & \bf{83.6} \\
    \bottomrule 
    \end{tabular}
\end{table}

We further evaluate the performance of the pre-trained models in two benchmark mental state decoding datasets from the Human Connectome Project~\citep[HCP;][]{barch2013function} and multi-domain task battery \citep[MDTB;][]{king2019functional}, spanning $20$ and $26$ distinct mental states respectively.
To adapt the pre-trained models to the mental state decoding (i.e., classification) task, we add a learnable classification embedding $E^{cls} \in \mathbb{R}^{n}$ to the end of input sequences $X$ and forward the model's corresponding prediction to a decoding head $p(\cdot)$, composed of a dense hidden layer with $e$ model units (one for each embedding dimension, with $tanh$ activation) as well as a $softmax$ output layer (with one model unit $i$ for each considered mental state in the data).
Accordingly, we adapt models by optimizing a standard cross entropy loss objective: $- \sum_i y_i \log \ {p(f(E^X))_i}$, where $y_i$ indicates a binary variable that is $1$ if $i$ is the correct mental state and $0$ otherwise. 
We always begin downstream adaptation with the pre-trained model parameters and  allow all parameters to change freely during training.
We randomly split each of the two downstream datasets into distinct training ($90\%$ of fMRI runs) and test ($10\%$ of fMRI runs) datasets and adapt models for $1,000$ training steps at a mini-batch size of $256$ and a learning rate of $5e^{-5}$ (otherwise using the same learning parameters as for upstream training).
During training, we sample sequences from the fMRI datasets according to the accompanying event files, which specify the beginning and end of each experimental trial underlying a mental state~\citep[when accounting for the temporal delay of the haemodynamic response function; for details, see ][]{thomas2022self}.

The adapted H3 variants with long convolutions perform on par with the other models in accurately identifying the mental states of the downstream evaluation datasets (see Table \ref{tab:fmri_downstream}: F1-scores are macro-averaged).

\section{Methods Details\label{sec:methods_details_supp}}
We discuss details of our methods.
\ifmefomo
\subsection{Kernel Fusion}
Naive implementations of the FFT convolution incur expensive GPU memory IO.
Each FFT and inverse FFT operation requires at least one read and write of the input sequence from GPU memory, and so does the pointwise multiplication operation.
For long sequences, the IO costs may be even worse: the entire input sequence cannot fit into SRAM, so optimized implementations such as cuFFT~\citep{cufft} must take multiple passes over the input sequence using the Cooley-Tukey decomposition of the FFT~\citep{cooley1965an}.
Following \textsc{FlashAttention}~\citep{dao2022flashattention}, \fastconv's first contribution is to fuse the entire FFT convolution into a single kernel and compute the result directly in GPU SRAM to avoid this overhead.

\subsection{Butterfly Decomposition}
Kernel fusion reduces the IO requirements, but the fused FFT operations still cannot take full advantage of specialized matrix multiply units on modern GPUs, such as Tensor Cores on Nvidia GPUs, which perform fast 16 $\times$ 16 matrix multiplication.
We appeal to a classical result, also known as the four-step or six-step FFT algorithm~\citep{bailey1990ffts}, that rewrites the FFT as a series of block-diagonal Butterfly matrices~\citep{parker1995random} interleaved with permutation.

The Butterfly decomposition states that we can decompose an $N$-point FFT into a series of FFTs of sizes $N_1$ and $N_2$, where $N = N_1N_2$.
Conceptually, the algorithm reshapes the input as an $N_1 \times N_2$ matrix, applies $N_1$ FFTs of size $N_2$ to the columns, multiplies each element by a twiddle factor, and then applies $N_2$ FFTs of size $N_1$ to the rows.

More precisely, let $\vF_N$ denote the DFT matrix corresponding to taking the $N$-point FFT.
Then, there exist permutation matrices $\vP$, and a diagonal matrix $\vD$, such that $\vF_N = \vP(\vI_{N_2} \otimes \vF_{N_1})\vP^T \vD(\vI_{N_1} \otimes \vF_{N_2})\vP$.
$\vP$ denotes a permutation matrix that reshapes the input to $N_1 \times N_2$ and takes the transpose, $\vD$ denotes a diagonal matrix with the twiddle factors along the diagonal, $\otimes$ denotes the Kronecker product, and $vI_{N_i}$ and $\vF_{N_i}$ are the identity and DFT matrices of size $N_i \times N_i$.
Precise values for $\vF_{N_i}$, $\vD$, and $\vP$ are given in Appendix~\ref{sec:methods_details_supp}.

The Butterfly decomposition incurs $O(Nr \log N / \log r)$ FLOPS for a sequence length $N = r^p$, with \textit{block size} $r$.
In general FFT implementations, $N$ is typically padded to a power of two, so that the block size can be set to $2$ to minimize the total number of FLOPS.
However, on GPUs with a specialized $b \times b$ matrix multiply unit, the FLOP cost of computing an $r \times r$ matrix multiply with $r < b$ is equivalent to performing a single $b \times b$ matrix multiply.
Thus the actual FLOP count scales as $O(N b \log N / \log r)$ for $r < b$.
Increasing the block size up to $b$ actually \textit{reduces} the FLOP cost.

Table~\ref{tab:block_micro} demonstrates this tradeoff on an A100 GPU, which has specialized matrix multiply units up to 16 $\times$ 32.
Runtime decreases as $r$ increases from 2, even though theoretical FLOPS increase.
Once $r > b$, runtime begins increasing as actual FLOPS increase as well.
\else
\fi
\ifmefomo
\else
\subsection{Butterfly Decompositions}
\fi
We describe how to construct $\vD$ in the Butterfly decomposition, and $\vB$ in the three pass algorithm.

\paragraph{Twiddle Matrices}
We describe how to construct $N_1 \times N_2$ Twiddle matrices.

Let $M \in \mathbb{C}^{N_1 \times N_2}$.
Then $M_{j,k} = \exp(-2\pi i jk / N)$.
The twiddle factors $\vD$ can be constructed by flattening $M$ and using them along the diagonal of $\vD$.

\paragraph{Butterfly Matrix}
We construct $\vB$ in the three pass algorithm.

Let $\vB^{(m)}$ denote the Butterfly matrix that needs to be constructed for a three pass algorithm with $N = lm$, and assume that $m$ is a power of 2.
$\vB^{(m)}$ is a block matrix, where each block is a diagonal matrix.
In particular, we have:
$$
\vB = \begin{bmatrix}
    \vD_{1,1} & \dots & \vD_{1, m} \\
    \vdots & \ddots & \vdots \\
    \vD_{m, 1} & \dots & \vD_{m, m},
\end{bmatrix}.
$$

We show how to construct $\vD_{j, k}$.
$\vD_{j, k}$ is a diagonal matrix of size $l \times l$.
The entries of $\vD_{j, k}$ are given by the following:
$$
\vD_{j, k}[\tau] = \exp(-2i\pi k (jl + \tau) / N).
$$

\subsection{Additional details about the three pass algorithm}

We share a few additional details about the three pass algorithm that allow for efficient training.

The butterfly matrices $\vB$ have complex coefficients.
Typically, we train models over real time series.
This mismatch has the potential to increase the amount of GPU memory IO: it is necessary to read $N$ real numbers, but write $N$ complex numbers.

We can alleviate this problem by using a well-known transformation between a real FFT of length $2L$ and a complex FFT of length $L$~\citep{brigham1988fast}.
In essense, a real FFT of length $2L$ can be converted into a complex FFT of length $L$.
In our algorithm, we exploit this as follows:
\begin{itemize}
    \item Given an input of real points $N$, reshape the input to be a complex input of length $N / 2$.
    \item Compute the complex FFT convolution over the input of length $N / 2$ using the three pass algorithm.
    \item Convert the output to be a real output of length $N$.
\end{itemize}
The first and last steps can be fused with a Butterfly matrix multiplication kernel, thereby keeping the total IO cost the same as the original algorithm.

\section{Theory\label{sec:theory_supp}}

\subsection{Three-Pass Algorithm}
\ifmefomo

The full algorithm for \fastconv for $N > l$ is shown in Algorithm~\ref{alg:fastconv}.
\begin{algorithm}[t]
    \algsetup{linenosize=\tiny}
    \caption{\label{alg:fastconv} \fastconv}
    \small
    \begin{algorithmic}[1]
      \REQUIRE Input $u \in \mathbb{R}^{B \times H \times N}$, $\vK \in \mathbb{R}^{H \times N}$, $\vD \in \mathbb{R}^{H}$, where $N = lm$ is the sequence length, $H$ is the head dimension, and $B$ is the batch size. 
      \STATE $\hat{\vK} \gets FFT(\vK)$
      \STATE $\vD'_{\vK} \gets \vP (\hat{\vK}) \vP^{-1}$ 
      \STATE $u \gets \overline{\vB}^{-1}u$
      \STATE Compute $u \gets (\vI_{m} \otimes \overline{\vF}_{l}) \vD'_{\vK} (\vI_{m} \otimes \vF_{l}) u$
      in parallel across $m$ streaming multiprocessors
      \STATE Return $\overline{\vB} u \in \bR^{B \times H \times N}$
    \end{algorithmic}
  \end{algorithm}

We show that Algorithm~\ref{alg:fastconv} is correct, and that it can be computed in three passes over the input sequence.
\begin{proposition}
    \label{prop:fastconv}
    Algorithm~\ref{alg:fastconv} computes the convolution $u \ast \vK$ with at most three passes over the input sequence $u$.
\end{proposition}
\else
\fi

We prove Proposition~\ref{prop:fastconv}.

    \paragraph{Convolution}
    Recall that a convolution between two vectors $u$ and $k$ of length $N$ is given by the following:
    $$
    u \ast k = \bar{\vF}_L \text{Diag}(\vF_L k) \vF_L u.
    $$
    We can precompute $\bar{\vF}_L k$, since it is shared across all inputs in a batch.
    Let $\vD = \bar{\vF}_L k$.
    Then, the above is given by:
    $$
    u \ast k = \bar{\vF}_L \vD \vF_L u.
    $$
    
    \paragraph{Decomposition}
    One property of $\vF_L$ is that it can be decomposed.
    For example, if $L = 2l$, then we can write the following:
    $$
    \vF_{2l} = \vB \begin{bmatrix}
    \vF_{l} & 0 \\
    0 & \vF_{l}
    \end{bmatrix} \vP,
    $$
    where $\vP$ is a permutation matrix (in this case, an even-odd permutation), and $\vB$ is a Butterfly matrix.
    
    We can leverage this to re-write a convolution of length $2l$.
    Let $u$ and $k$ be vectors of length $2l$.
    Then, we can write the following:
    \begin{align*}
    u \ast k &= \bar{\vF}_{2l} \vD \vF_{2l} u \\
    &= \bar{\vF}_{2l} \vD \bar{\vF}_{2l}^{-1} u \\
    &= \bar{\vB} \begin{bmatrix}
        \bar{\vF}_{l} & 0 \\
        0 & \bar{\vF}_{l}
    \end{bmatrix} \vP \vD \vP^{-1} \begin{bmatrix}
        \bar{\vF}_l^{-1} & 0 \\
        0 & \bar{\vF}_l^{-1} \\ 
    \end{bmatrix} \bar{\vB}^{-1} u \\
    &= \bar{\vB} \begin{bmatrix}
        \bar{\vF}_{l} & 0 \\
        0 & \bar{\vF}_{l}
    \end{bmatrix} \vD' \begin{bmatrix}
        \bar{\vF}_l^{-1} & 0 \\
        0 & \bar{\vF}_l^{-1} \\ 
    \end{bmatrix} \bar{\vB}^{-1} u,
    \end{align*}
    for some diagonal matrix $\vD'$.
    Note that the three terms in the middle can be computed in parallel.
    
    This pattern extends to $L = 2^m l$, and yields $2^m$ parallelism in the product.

    It remains to show that each of the Butterfly matrices can be computed with a single read/write over the input sequence.

    Recall that the Butterfly matrices have the following form:
    $$
    \vB = \begin{bmatrix}
        \vD_{1,1} & \dots & \vD_{1, m} \\
        \vdots & \ddots & \vdots \\
        \vD_{m, 1} & \dots & \vD_{m, m},
    \end{bmatrix}
    $$
    where the $\vD_{i, j}$ are diagonal matrices of size $l \times l$.

    A matrix-vector multiply $y = \vB u$ can be partitioned on a GPU as follows.
    Suppose that each SM has enough shared memory to store $l$ elements of the input.
    Let there be $m$ SMs processing this input.
    Each SM will read $l$ input and write $l$ output, for $ml = N$ total reads and writes.

    Specifically, SM i will read
    \begin{align*}
    &u[(l/m)i:(l/m)(i+1)], \\
    &u[l+(l/m)i:l+(l/m)(i+1)], \dots,\\
    &u[(m-1)l+(l/m)i:(m-1)l+(l/m)(i+1)].
    \end{align*}
    These inputs are exactly the inputs needed to compute:
    \begin{align*}
        &y[(l/m)i:(l/m)(i+1)], \\
        &y[l+(l/m)i:l+(l/m)(i+1)], \dots,\\
        &y[(m-1)l+(l/m)i:(m-1)l+(l/m)(i+1)].
    \end{align*}
    The SM can then distribute these portions of the matrix-vector multiply to the independent threads of the SM.

    This completes the proof.

\subsection{Expressivity of Long Convolutions}
\label{sec:layer_analysis}
We show that long convolutions and SSMs are equivalent in expressivity (the subset relation in Figure~\ref{fig:banner} right is actually set equality).

\begin{proposition}
    \label{prop:expressive}
    Let $M$ be a positive integer that evenly divides $N$.
    Any convolution kernel of length $N$ can be written as the sum of $N / M$ diagonal SSMs with hidden state $M$.
\end{proposition}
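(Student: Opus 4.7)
The plan is to decompose the kernel in the Fourier domain and regroup the $N$ resulting exponentials into $N/M$ bundles of $M$, each bundle being the convolution kernel of a single diagonal SSM with hidden state $M$. Concretely, let $\omega = e^{-2\pi i/N}$ and write the inverse DFT expansion
\[
\vK_t \;=\; \sum_{k=0}^{N-1} \hat{\vK}_k \,\omega^{-kt}, \qquad t=0,1,\dots,N-1,
\]
where $\hat{\vK}$ is the DFT of $\vK$. This representation expresses $\vK$ as a sum of $N$ geometric sequences with bases $\omega^{-k}$.

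Next, I recall the kernel generated by a discrete diagonal SSM with hidden state $M$. Following the recurrence stated in the Background section, a diagonal $\vA = \operatorname{diag}(\lambda_1,\dots,\lambda_M)$ with $\vB = (b_1,\dots,b_M)^\top$ and $\vC = (c_1,\dots,c_M)$ yields
\[
\vK^{\text{SSM}}_t \;=\; \vC \vA^t \vB \;=\; \sum_{j=1}^{M} (c_j b_j)\, \lambda_j^{\,t}.
\]
Thus any sum of $M$ geometric sequences with distinct bases $\lambda_j$ and arbitrary coefficients $\alpha_j = c_j b_j$ is realizable by a diagonal SSM of hidden state $M$; we may simply take $b_j = 1$ and $c_j = \alpha_j$.

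With these pieces in place, the construction is immediate. Partition the $N$ Fourier indices into $N/M$ consecutive blocks of size $M$: for $i = 0,1,\dots,N/M-1$, let block $i$ consist of the indices $\{iM, iM+1, \dots, iM+M-1\}$. Define the $i$-th diagonal SSM by
\[
\lambda^{(i)}_j \;=\; \omega^{-(iM+j)}, \qquad c^{(i)}_j b^{(i)}_j \;=\; \hat{\vK}_{iM+j}, \qquad j=0,\dots,M-1.
\]
The kernel of the $i$-th SSM is $\vK^{(i)}_t = \sum_{j=0}^{M-1} \hat{\vK}_{iM+j}\, \omega^{-(iM+j)t}$, and summing over $i$ recovers the full inverse DFT expansion of $\vK$, giving $\sum_i \vK^{(i)}_t = \vK_t$ for every $t \in \{0,\dots,N-1\}$. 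This establishes the claim.

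There is no real obstacle in the argument; the only subtlety worth flagging is that the bases $\lambda_j$ are complex roots of unity, so the individual SSMs are complex-valued. This matches the diagonal SSM parameterizations used in practice (e.g.\ S4D), and the real-valuedness of $\vK$ is automatically preserved by the sum because the construction is just the inverse DFT. If a strictly real construction were required, one could pair conjugate frequencies $\omega^{-k}$ and $\omega^{-(N-k)}$ into the same block, using $M/2$ conjugate pairs per SSM, which works whenever $M$ is even.
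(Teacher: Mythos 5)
Your proof is correct and is essentially the paper's argument: the paper writes $\vK = \vV\vB$ for a (transposed) Vandermonde matrix $\vV$ built from arbitrary distinct poles $a_1,\dots,a_N$, sets $\vB = \vV^{-1}\hat{\vK}$, and then partitions the $N$ poles into $N/M$ groups of $M$ exactly as you do; your version simply makes the concrete choice $a_k = \omega^{-k}$, for which the Vandermonde matrix is the inverse DFT matrix and $\vB$ is the DFT of the kernel. The only substantive difference is that the paper's poles are real (so its constituent SSMs are real-valued) whereas yours are complex roots of unity---a point you already flag and correctly resolve via conjugate pairing.
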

\begin{proof}
    For the case $M=1$, consider a diagonal SSM with $\vA \in \bR^{N \times N}$ diagonal with entries $a_1, \dots, a_{N}$, and $\vB \in \bR^{N \times 1}$.
    For simplicity, we will roll $\vC$ into $\vB$ and set $\vD= 0$.
    
    This SSM gives rise to the following kernel $\vK$ with entries:
    $$
        \vK_i = A^{i} \vB = \sum_{j=1}^N a_j^i b_j.
    $$
    This is equivalent to
    $$
        \vK = \vV \vB, 
    $$
    where $\vV$ is the transpose of a Vandermonde matrix
    $$\vV = \begin{bmatrix}
    1 & a_{1} & a_{1}^{2} & \dots & a_{1}^{N-1} \\ 
    1 & a_{2} & a_{2}^{2} & \dots & a_{2}^{N-1} \\
    \hdotsfor{5} \\
    1 & a_{N} & a_{N}^{2} & \dots & a_{N}^{N-1}
    \end{bmatrix}^T.
    $$
    Vandermonde matrices have a determinant that is nonzero 
    if and only if $a_1,...,a_{N}$ are all distinct. 
    Thus $\vV^T$ is invertible if $a_1,...,a_{N}$ are distinct and hence $\vV$
    is also invertible if $a_1,...,a_{N}$ are distinct.
    Given a kernel $\hat{\vK}$, we can thus express that kernel by picking any $a_1,...,a_{N}$ that are distinct and then picking 
    $\vB = \vV^{-1} \hat{\vK}$, then $\vV \vB = \vV\vV^{-1} \hat{\vK} =\hat{\vK}$, finishing the proof.
    
    In the case where $M>1$ we have consider a diagonal SSM with $\vA \in \bR^{N \times N}$ diagonal with entries $a_1, \dots, a_{N}$, and $\vB \in \bR^{N \times 1}$.
    Partition, the state $N$ into $N / M$ partitions of size $M$.
    Let $\sigma(i,j)$ denote the partition function that bijectively maps $(i, j)$ pairs to $[1, \dots, N]$ for $1 \leq i \leq N / M, 1 \leq j \leq M$.
    
    Then the convolution kernel has the following entries $\vK_i$:
    $$
        \vK_l = \sum_{i=1}^N a_i^l b_i = \sum_{i = 1}^{N / M} \sum_{j=1}^M a_{\sigma(i,j)}^l b_{\sigma(i,j)}.
    $$
    Consider the inner sum $\sum_{j=1}^M a_{\sigma(i,j)}^l b_{\sigma(i,j)}$.
    This defines a convolution kernel given by a diagonal SSM with hidden state $M$, $\vA$ with diagonal entries $[a_{\sigma(i,1)}, \dots, a_{\sigma(i, M)}]$, and $\vB = [b_{\sigma(i,1)}, \dots, b_{\sigma(i, M)}]^T$.
    
    Thus, this diagonal SSM with hidden state $N$ is the sum of $N / M$ diagonal SSMs with hidden state $M$.
    \end{proof}

Proposition~\ref{prop:expressive} suggests that long convolutions and SSMs have fundamentally the same expressive power, especially when SSMs are used in a deep architecture that stacks multiple independent SSMs in layers.
The significance of this result is that this allows us to view SSMs and general long convolutions
as the same construct.
\ifmefomo
\else
\section{Additional Methods\label{sec:methods_additional}}

We discuss some additional methods that we tried but did not include in the main body of the paper.

\subsection{Constant-Recursive Kernels}
In early explorations, we explored a constant-recursive formulation of convolution kernels as a mechanism for hardware efficiency.
We ultimately did not go with this route due to the development of \fastconv.

We wish to develop kernels such that the output of convolving the kernels with a signal can be computed recurrently.
The goal is to develop long convolution kernels that can be computed efficiently.

We show that constant-recursive kernels satisfy our requirements.
In particular, the output of convolving a constant-recursive kernel with a signal results in an output that is also constant-recursive.
We show that our formulation of constant-recursive kernels is expressive enough to capture S4D kernels---and by corollary by Proposition~\ref{prop:expressive}, any kernel.

\subsubsection{Background: Constant-Recursive Sequence}

We define constant-recursive sequences.

\paragraph{Constant-Recursive Sequence}
A constant-recursive sequence is a sequence of numbers $s_1, s_2, \dots$ that satisfies the following recursive function:
$$
s_n = a_1s_{n-1} + a_2s_{n-2} + \dots + a_p s_{n-p} = \Sigma_j^p a_j s_{n-j},
$$
for all $n > p$, where $a_1, \dots, a_p$ are constants.
We will call $p$ the \textbf{power} of the constant-recursive sequence (this terminology may not be standard).

\subsubsection{Constant-Recursive Kernel}
We use the idea of constant-recursive sequences to define a constant-recursive kernel.
Our key insight is that the convolution of a constant-recursive sequence with a signal is itself a constant-recursive sequence.
We will define our kernel as the sum of $d$ constant-recursive kernels, where $d$ is a hidden state dimension (equivalent to the state dimension $d$ in a state space model).
We will show that our formulation is expressive enough to capture S4D.

We will define the convolution kernel $\bar{K}$ through a recurrence relation with dimension $d$, and power $p$: 
\begin{align*}
    \bar{K}_{i} &= \Sigma_{r=1}^{d} K_{i,r} \\
    K_{i,r} &= \begin{cases}
k_{i,r} & \text{for } 1 \leq i \leq p \\
\Sigma_{j=1}^{\text{min}(p, i-p)} a_{j,r} K_{i-p-j+1,r} & \text{otherwise}
\end{cases}
\end{align*}
Note that each $K_{i, r}$ is a form of constant-recursive sequence, with a form of delay in the sequence.
In particular, we have that $K_{i,r}$ depends on $K_{i-2p+1,r}, \dots, K_{i-p,r}$.
We make this choice for computational reasons---it means we can compute $K_{p+1,r}, \dots, K_{2p,r}$ at once, then $K_{2p+1,r}, \dots, K_{3p,r}$, etc each in one go.
Formally, it is equivalent to a constant-recursive sequence with twice the power, but where the first $p$ constants are all zeros.

The special cases $d=1, p>1$ and $d>1, p=1$ are worth analyzing separately to develop some intuition about what this convolution does.

\subsubsection{Case $d=1, p>1$:}
We first analyze the case when $d=1$ to develop some intuition about what this kernel is expressing.
We will see that using this kernel in a convolution yields a constant-recursive output.

When $d=1$, the kernel expression becomes
\begin{align} 
    \bar{K}_{i,1} &= \begin{cases}
k_{i,1} & \text{for } 1 \leq i \leq p \\
\Sigma_{j=1}^{\text{min}(p, i-p)} a_{j,1} \bar{K}_{i-p-j+1,1} & \text{otherwise}
\label{eq:ssconv}
\end{cases}
\end{align}

We show that using this kernel in a convolution results in a constant-recursive output sequence:
\begin{proposition}
    \label{prop:recurrent_k}
    Let $\bar{K} \in \mathbb{R}^L$ be a kernel defined by Equation~\ref{eq:ssconv}, and let $u \in \bR^L$.
    Then $y = u \ast \bar{K} \in \bR^L$ is given by the following:
    \begin{equation}
        \label{eq:recurrent_y}
        y_i = \Sigma_{j=1}^{\text{min}(i, p)} k_j u_{i-j+1} + \Sigma_{j=1}^{\text{min}(p, i-p)} a_j y_{i-d-j+1}
    \end{equation}
\end{proposition}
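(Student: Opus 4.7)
The plan is to prove the identity by unrolling the convolution sum and substituting the defining recurrence for $\bar{K}$, then reindexing the double sum so that the inner sum collapses back into a value of $y$ at an earlier time index. Throughout I will treat the causal convolution in 1-indexed form, $y_i = \sum_{m=1}^{i} \bar{K}_m\, u_{i-m+1}$, which is the standard convention used elsewhere in the paper.

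First I would split the convolution sum at $m=p$, according to the piecewise definition of $\bar{K}_m$ in Equation~\ref{eq:ssconv}. For $1 \le m \le \min(i,p)$ the kernel equals $k_m$, which immediately yields the first term $\sum_{j=1}^{\min(i,p)} k_j u_{i-j+1}$ of Equation~\ref{eq:recurrent_y}. If $i \le p$ the second range is empty and the base case of the claim matches (the sum $\sum_{j=1}^{\min(p,i-p)}$ is empty since $i-p \le 0$), so the remaining work is the case $i > p$.

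For $i > p$, I would write the remaining tail as $\sum_{m=p+1}^{i} u_{i-m+1} \sum_{j=1}^{\min(p,\,m-p)} a_j \bar{K}_{m-p-j+1}$ and then swap the order of summation. Here I need to be careful with the bounds: the constraint $j \le m-p$ becomes $m \ge p+j$, and combined with $m \le i$ and $j \le p$, the outer index $j$ runs from $1$ to $\min(p, i-p)$, while the inner index $m$ runs from $p+j$ to $i$. After the swap, I would substitute $n = m - p - j + 1$ in the inner sum. A short calculation shows $i - m + 1 = (i - p - j + 1) - n + 1$, so the inner sum becomes $\sum_{n=1}^{i-p-j+1} \bar{K}_n\, u_{(i-p-j+1)-n+1}$, which is exactly $y_{i-p-j+1}$ by the definition of $y$. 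Pulling this back into the outer sum yields the second term $\sum_{j=1}^{\min(p,i-p)} a_j\, y_{i-p-j+1}$ of Equation~\ref{eq:recurrent_y}, completing the proof.

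The main obstacle is purely bookkeeping: tracking the $\min(p, i-p)$ bound correctly through the swap of summation order and verifying the reindexing $n = m-p-j+1$ recovers exactly the convolution sum defining $y_{i-p-j+1}$. I do not anticipate any deeper difficulty, since no cancellation or analytic estimate is needed; the statement is essentially a restatement of the fact that a convolution against a constant-recursive sequence is itself constant-recursive with the same recurrence coefficients, applied to the delayed form of the recurrence used here.
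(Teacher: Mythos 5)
The paper states Proposition~\ref{prop:recurrent_k} without supplying a proof, so there is nothing to compare against; judged on its own, your argument is correct and complete. Splitting the causal convolution $y_i=\sum_{m=1}^{i}\bar K_m u_{i-m+1}$ at $m=p$, substituting the recurrence of Equation~\ref{eq:ssconv} into the tail, exchanging the order of summation with the bounds $m\ge p+j$, $m\le i$, $j\le p$ (so that $j$ runs to $\min(p,i-p)$), and reindexing by $n=m-p-j+1$ does collapse the inner sum to $y_{i-p-j+1}$, and the empty-range check for $i\le p$ handles the base case. The one point worth flagging explicitly is that your derivation produces the term $\sum_{j=1}^{\min(p,i-p)} a_j\, y_{i-p-j+1}$, whereas the statement as printed reads $y_{i-d-j+1}$; since this subsection treats the case $d=1$, $p>1$ and the whole point of the proposition is that $y$ inherits the \emph{same} delayed recurrence as $\bar K$ (whose feedback indices are $i-p-j+1$), the $d$ in Equation~\ref{eq:recurrent_y} is evidently a typo for $p$, and you have proved the corrected statement. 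You should say so rather than silently substituting one for the other, but no step of the argument fails.
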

The significance of Proposition~\ref{prop:recurrent_k} is that $y_i$ has the exact same constant-recursive structure as $K_{i, r}$ -- and can thus be computed as a recurrently.

\paragraph{Equivalent SSM}
We construct the $\vA$, $\vB$, $\vC$ matrices for an SSM that produces this kernel.
Let $\mathbf{B} = [k_{1,1}, \dots, k_{1,p}]^T$, $\mathbf{C} = [1, 0, \dots, 0]^T$, and $\mathbf{A} \in \mathcal{R}^{p \times p}$ be the following (inverted companion) matrix:
$$
\mathbf{A} = 
\begin{bmatrix}
0 & 1 & 0 & \dots & 0 \\
0 & 0 & 1 & \dots & 0 \\
\vdots \\
0 & 0 & 0 & \dots & 1 \\
a_{1,1} & a_{1,2} & a_{1,3} & \dots & a_{1,p}
\end{bmatrix}
$$
then $\bar{K}_i = \mathbf{C}^T \mathbf{A}^{i-1} \mathbf{B}$, which reveals that this constant-recursive matrix is equivalent to an SSM.

\subsubsection{Case $d>1, p=1$:}
This case, where the constant-recursive sequence as power $p=1$, recovers S4D with $d>1$.

The kernel definition simplifies to:
\begin{align*}
    \bar{K}_{i} &= \Sigma_{r=1}^{d} K_{i,r} \\
    K_{i,r} &= \begin{cases}
                k_{i,r} & \text{for } 1 = i \\
                a_{1,r} K_{i-1,r} & \text{otherwise}
            \end{cases}
\end{align*}
This ensures that
\begin{equation*}
    \bar{K}_i = \Sigma_{r=1}^{d} k_{1,r} a_{1,r}^i.
\end{equation*}
Now let $\mathbf{B} = [k_{1,1},...,k_{1,d}]^T$, $\mathbf{C} = [1,...,1]^T$, and $\mathbf{A} = \text{diag}(a_{1,1},...,a_{1,d})$. This ensures that
$\bar{K}_i = \mathbf{C}^T \mathbf{A}^i \mathbf{B}$, showing that diagonal SSMs can be recovered by constant-recursive kernels.

\subsection{Wavelet Basis}

In our initial explorations, we parameterized a Haar wavelet basis as a mechanism for producing convolution kernels.
We ultimately did not go with this route, since we found that a simpler solution (direct parameterization) was sufficient.
\fi
\section{Experiment Details\label{sec:exp_details_supp}}

We discuss all the details of our experiments.

\begin{table*}[ht]
    \caption{The values of the best hyperparameters found; LRA, images, language, and time series, and brain fMRI. LR is learning rate and WD is weight decay. BN and LN refer to Batch Normalization and Layer Normalization.
    We use random weight initialization in all runs.}
    \label{tab:hyperparameters}
    \centering
    \tiny
    \begin{tabular}{lccccccccccc}
    \toprule
    {} & {Depth} & {Features $H$} & {Norm} & {kernel LR} & {Dropout} & {$\lambda$} & {Batch Size} & {WD} & {Epochs} & LR \\
    \midrule
    {ListOps} & {8} & {128} & {BN} & {0.0005} & {0.2} & {0.002} & {50} & {0.05} & {40} & 0.01 \\ 
    {Text (IMDB)} & {6} & {256} & {BN} & {0.001} & {0.2} & {0.003} & {16} & {0.05} & {32} & 0.01 \\
    {Retrieval (AAN)} & {6} & {256} & {BN} & {0.0001} & {0.1} & {0.004} & {32} & {0.05} & {20} & 0.01 \\
    {Image} & {6} & {512} & {LN} & {0.001} & {0.2} & {0.003} & {25} & {0.05} & {200} & 0.01 \\
    {Pathfinder} & {6} & {256} & {BN} & {0.001} & {0.3} & {0.001} & {64} & {0.03} & {200} & 0.004 \\
    {Path-X} & {6} & {256} & {BN} & {0.0005} & {0.3} & {0.001} & {4} & {0.05} & {50} & 0.0005 \\
    \midrule
    {sCIFAR} & {6} & {512} & {LN} & {0.001} & {0.2} & {0.001} & {50} & {0.05} & {300} & 0.01 \\
    {2D CIFAR} & {4} & {128} & {LN} & {0.001} & {0} & {0.001} & {50} & {0.01} & {100} & 0.01 \\
    \midrule
    {OpenWebText} & {12} & {768} & {LN} & {0.001} & {0} & {0.001} & {32} & {0.1} & {100B tokens} & 0.0003 \\
    \midrule
    {Time Series} & {3} & {128} & {BN} & {0.001} & {0.2} & {0.003} & {50} & {0.01} & {50} & {1e-5} \\
    \midrule
    \cmidrule{10-11}
    {Brain Upstream} & {4} & {768} & {LN} & {0.001} & {0.2} & {0.0005} & {512} & {0.1} & 5000 steps & 0.01 \\
    {Brain Downstream} & {4} & {768} & {LN} & {0.001} & {0.2} & {0.00005} & {256} & {0.1} & 1000 steps & 0.01 \\
    \bottomrule 
    \end{tabular}
\end{table*}

\paragraph{Hyperparameter Sweeps}
For all methods, we swept the following parameters:
\begin{itemize}
    \item Kernel Dropout: [0.1, 0.2, 0.3, 0.4, 0.5]
    \item Kernel LR: [0.0001, 0.0005, 0.001]
    \item $\lambda$: [0.001, 0.002, 0.003, 0.004, 0.005]
\end{itemize}

\paragraph{Compute Infrastructure}

The experiments in this paper were run on a mixture of different compute platforms.
The LRA experiments, except for Path-X, were swept on a heterogeneous cluster of 1xV100 and 2xV100 nodes.
Path-X and sequential CIFAR were run on single 8xA100 nodes.
The language modeling experiments were run on a single 8xA100 node.
The time series experiments were run on a cluster with 1xP100 nodes.
The brain fMRI experiments were run on a cluster of 2xV100 nodes.

\paragraph{Final Hyperparameters}
Final hyperparameters for reported results are given in Table~\ref{tab:hyperparameters}.

\ifmefomo
\paragraph{Hyperparameter comparision to S4}

Compared to the hyperparameters necessary to train S4, our regularization approaches have significantly fewer hyperparameters and choices than S4.
Convolution-specific hyperparameters for S4 and long convolutions are shown in Table~\ref{tab:hyperparams}.
\else
\fi
\subsection{Functional Magnetic Resonance Imaging Data}
\label{sec:fmri_supp}

Neuroimaging research can be considered as recently entering a big data era, as individual researchers publicly share their collected datasets more frequently.
This development opens up new opportunities for pre-training at scale in neuroimaging research, as recently demonstrated by~\citet{thomas2022self}.
In their work, the authors show that Transformers, pre-trained to predict brain activity for the next time point of input fMRI sequences, outperform other models in learning to identify the mental states (e.g., happiness or fear) underlying new fMRI data.
Recently,~\citet{dao2022hungry} have shown that H3 performs on par with Transformers in this transfer learning paradigm.

To test whether long convolutions also perform on par with SSMs, as implemented in H3, and Transformers in this paradigm, we replicate the analyses of ~\citet{thomas2022self}, using their published fMRI datasets. Conventionally, functional Magnetic Resonance Imaging (fMRI) data are represented in four dimensions, describing the measured blood-oxygen-level-dependent (BOLD) signal as a sequence $S = \{V_1, ..., V_t\}$ of 3-dimensional volumes $V \in \mathbb{R}^{x \times y \times z}$, which show the BOLD signal for each spatial location of the brain (as indicated by the three spatial dimensions $x$, $y$, and $z$).
Yet, due to the strong spatial spatial correlation of brain activity, fMRI data can also be represented differently, by representing individual sequences as a set $\Theta \in {\theta_1, ..., \theta_n}$ of $n$ functionally-independent brain networks $\theta$, where each network describes the BOLD signal for some subset of voxels $v_{x,y,z} \in V$~\citep[e.g.,][]{dadi2020fine}.
The resulting sequences $X \in \mathbb{R}^{t \times n}$ indicate whole-brain activity as a set of $n$ brain networks for $t$ time points~\footnote{\citet{thomas2022self} use $n=1,024$ networks defined by the Dictionaries of Functional Modes~\citep[DiFuMo; ][]{dadi2020fine} Atlas.}.

\begin{figure*}[t]
    \centering
    \includegraphics[width=\textwidth]{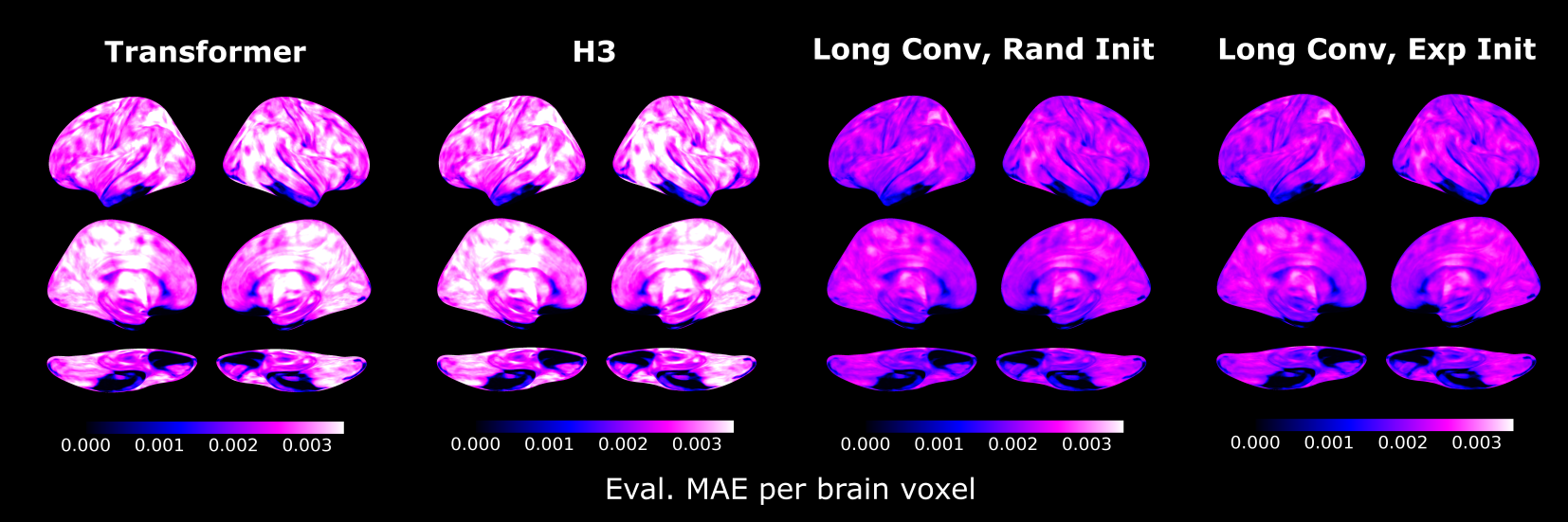}
    \ifarxiv
    \else
    \vspace{-1.5em}
    \fi
    \caption{\label{fig:brainmaps}
    Mean absolute error of pre-trained models in upstream evaluation data for each location of the brain. Brain maps are projected onto the inflated cortical surface of the FsAverage template~\citep{fischl2012freesurfer}.
    }
    \ifarxiv
    \else
    \vspace{-1.5em}
    \fi
\end{figure*}
\paragraph{Upstream learning:} In line with~\citet{thomas2022self}, we pre-train models $f(\cdot)$ to predict whole-brain activity for the next time point $j$ of an fMRI sequence $X$, using a mean absolute error (MAE) training objective, given the model's prediction $\hat{X} \in \mathbb{R}^{t \times n}$: MAE $= \frac{1}{n} \sum_{i=1}^{n} |X_{j,i} - \hat{X}_{j,i}|$; $\hat{X}_{t,n} = b_n + \sum_n f(E^{X})_{t,e} w_{e,n}$; $E^{X}_{t,e} = E^{TR} + E^{pos} + b_e + \sum_n X_{t,n} w_{n,e}$.
Here, $E^{TR} \in \mathbb{R}^{e}$ and $E^{pos} \in \mathbb{R}^{e}$ represent learnable embeddings for each possible time point and position of an input sequence~\citep[for details, see][]{thomas2022self}\footnote{As the sampling frequency of fMRI is variable between datasets, the same position of an input sequence can correspond to different time points.}.
Note that $f(\cdot)$ processes the input in a lower-dimensional representation $E^{X} \in \mathbb{R}^{t \times e}$, where $e=768$, obtained through linear projection.

In line with~\citet{thomas2022self} and~\citet{dao2022flashattention}, we pre-train a Transformer decoder (based on GPT) with $4$ hidden layers and $12$ attention heads and a H3 model with $4$ hidden layers~\citep[with $H=64$ and $m=1$; see ][]{dao2022hungry} in this task.
For both models, the sequence of hidden-states outputs of the last model layer are used to determine $\hat{X}$ (scaled to the original input dimension with linear projection).
We also pre-train variants of H3 that replace its SSM kernel with long convolutions.

We randomly divide the upstream data, which spans fMRI data from $11,980$ experimental runs of $1,726$ individuals, into distinct training and validation datasets by randomly designating $5\%$ of the fMRI runs as validation data and using the rest of the runs for training. 
During training, we randomly sample sequences of $100$ time points from the fMRI runs and train models with the ADAM optimizer (with $\beta_1=0.9$, $\beta_2=0.999$, and $\epsilon=1e^{-8}$ ) for $5,000$ steps at a mini-batch size of $512$ and a learning rate of $5e^{-4}$.
We also apply a linear learning rate decay schedule (with a warm-up phase of $10\%$ of the total number of training steps), gradient norm clipping at $1.0$, $L2$-regularisation (weighted by $0.1$), and dropout at a rate of $0.2$ (throughout all models).
\ifmefomo
The adapted H3 variants clearly outperform the other models in accurately predicting brain activity for the next time point of input sequences (Table~\ref{tab:fmri}).
\else
The adapted H3 variants clearly outperform the other models in accurately predicting brain activity for the next time point of input sequences (Table \ref{tab:fmri} of the main text).
\fi
We also find that the pre-trained models exhibit similar evaluation $MAE$ error distributions throughout the brain, with relatively higher errors in the posterior parietal, occipital, and cingulate cortices as well parts of the limbic system (Fig.~\ref{fig:brainmaps}).

\end{document}